
\documentclass{article} 
\usepackage{nips11submit_e,times}

\usepackage{epsf}
\usepackage{graphicx}
\usepackage{amsmath}
\usepackage{amssymb}
\usepackage{algorithmic}
\usepackage{algorithm}
\usepackage{array}
\usepackage{amsmath}
\usepackage{amssymb}
\usepackage{amsfonts}
\usepackage{amsthm}
\usepackage{booktabs}
\usepackage{caption}

\newlength{\minipagewidth}
\setlength{\minipagewidth}{\columnwidth}
\setlength{\fboxsep}{1.5mm}
\addtolength{\minipagewidth}{-\fboxrule}
\addtolength{\minipagewidth}{-\fboxrule}
\addtolength{\minipagewidth}{-\fboxsep}
\addtolength{\minipagewidth}{-\fboxsep}
\newcommand{\bookbox}[1]{\small
\par\medskip\noindent
\framebox[\columnwidth]{
\begin{minipage}{\minipagewidth} {#1} \end{minipage} } \par\medskip }

\hyphenation{hypo-thesis none-the-less}


\renewcommand{\Re}{\mathbb R}

\def\argmin{\mathop{\rm arg\,min}}

\newcommand{\MDP}{\mathcal M}
\newcommand{\bMDP}{\overline{\MDP}}
\newcommand{\discount}{\gamma}
\newcommand{\state}{\mathcal{X}}
\newcommand{\action}{\mathcal{A}}
\newcommand{\X}{\mathcal{X}}
\newcommand{\A}{\mathcal{A}}
\newcommand{\XA}{{\X\times\A}}
\newcommand{\reward}{\mathcal{R}}
\newcommand{\dynamics}{\mathcal{P}}
\newcommand{\bReward}{\overline{\mathcal{R}}}
\newcommand{\bDynamics}{\overline{\mathcal{P}}}
\newcommand{\T}{{\mathcal{T}}}
\newcommand{\bT}{{\overline{\mathcal{T}}}}

\newcommand{\nSamples}{L}

\newcommand{\nTasks}{M}
\newcommand{\sumTasks}{\sum_{m=1}^{\nTasks}}

\newcommand{\nStates}{N}
\newcommand{\sumStates}{\sum_{n=1}^{\nStates}}
\newcommand{\avgStates}{\frac{1}{\nStates}\sumStates}

\newcommand{\nStatesTask}{N_{m}}

\newcommand{\nTot}{\nSamples}
\newcommand{\sumTot}{\sum_{l=1}^{\nTot}}
\newcommand{\avgTot}{\frac{1}{\nTot}\sumTot}

\newcommand{\nDist}{S}
\newcommand{\sumDist}{\sum_{s=1}^{\nDist}}
\newcommand{\avgDist}{\frac{1}{\nDist}\sumDist}

\newcommand{\avgRatioB}{\sumTasks \lambda_{m}}
\newcommand{\avgRatioC}{\sum_{m=2}^\nTasks \lambda_{m}}

\newcommand{\nActions}{|\action|}
\newcommand{\sumActions}{\sum_{a\in\action}}
\newcommand{\avgActions}{\frac{1}{\nActions}\sumActions}

\newcommand{\taskDist}{\mathcal{E}}
\newcommand{\avgTaskDist}{\mathcal{E}}
\newcommand{\hTaskDist}{\widehat{\mathcal{E}}}


\newcommand{\F}{\mathcal{F}}

\newcommand{\B}{\mathcal{B}}

\newcommand{\mSpace}{\mathcal{S}}

\newcommand{\Rmax}{R_{\max}}

\newcommand{\Vmax}{V_{\max}}

\newcommand{\hQ}{\widehat{Q}}
\newcommand{\hQall}{\hQ}
\newcommand{\hQbat}{\hQ_{\text{BAT}}}
\newcommand{\tQ}{\widetilde{Q}}

\newcommand{\trunc}{T}

\newcommand{\hlambda}{\widehat{\lambda}}

\newcommand{\halpha}{{\hat{\alpha}}}
\newcommand{\hbeta}{{\hat{\beta}}}

\newcommand{\ind}[1]{\mathbb I\left\lbrace {#1} \right\rbrace}

\newcommand{\norm}[1]{\Arrowvert{#1}\Arrowvert}

\newcommand{\noise}{\xi}
\newcommand{\hnoise}{\hat{\xi}}


\newcommand{\expectA}[1]{\mathbb E \left[ {#1} \right]}
\newcommand{\expectB}[2]{\mathbb E_{#1} \left[ {#2} \right]}

\newcommand{\iid}{\stackrel{iid}{\sim}}

\newcommand{\hPi}{\hat{\Pi}}


\newcommand{\hmu}{\hat{\mu}}


\newcommand{\paragTitle}[1]{\textbf{#1.}}



\newtheorem{lemma}{Lemma}
\newtheorem{assumption}{Assumption}

\newtheorem{definition}{Definition}
\newtheorem{theorem}{Theorem}

\newcommand{\otoprule}{\midrule[\heavyrulewidth]}


\title{Transfer from Multiple MDPs}

\author{
Alessandro Lazaric \\
INRIA Lille - Nord Europe, Team SequeL, France \\
\texttt{alessandro.lazaric@inria.fr} \\
\And
Marcello Restelli \\
Department of Electronics and Informatics, Politecnico di Milano, Italy \\
\texttt{restelli@elet.polimi.it}
}

%

\nipsfinalcopy 

\begin{document}

\maketitle


\begin{abstract}
Transfer reinforcement learning (RL) methods leverage on the experience collected on a set of source tasks to speed-up RL algorithms. A simple and effective approach is to transfer samples from source tasks and include them in the training set used to solve a target task. In this paper, we investigate the theoretical properties of this transfer method and we introduce novel algorithms adapting the transfer process on the basis of the similarity between source and target tasks. Finally, we report illustrative experimental results in a continuous chain problem.
\end{abstract}


\section{Introduction}\label{s:introduction}

The objective of transfer in reinforcement learning (RL)~\cite{sutton1998reinforcement} is to speed-up RL algorithms by reusing knowledge (e.g., samples, value function, features, parameters) obtained from a set of source tasks. The underlying assumption of transfer methods is that the source tasks (or a suitable combination of these) are somehow similar to the target task, so that the transferred knowledge can be useful in learning its solution. A wide range of scenarios and methods for transfer in RL have been studied in the last decade (see~\cite{taylor2009transfer,lazaric2008knowledge} for a thorough survey). In this paper, we focus on the simple transfer approach where trajectory samples are transferred from source MDPs to increase the size of the training set used to solve the target MDP. This approach is particularly suited in problems (e.g., robotics, applications involving human interaction) where it is not possible to interact with the environment long enough to collect samples to solve the task at hand. If samples are available from other sources (e.g., simulators in case of robotic applications), the solution of the target task can benefit from a larger training set that includes also some source samples. This approach has been already investigated in the case of transfer between tasks with different state-action spaces in~\cite{taylor2008transferring}, where the source samples are used to build a model of the target task whenever the number of target samples is not large enough. A more sophisticated sample-transfer method is proposed in~\cite{lazaric2008transfer}. The authors introduce an algorithm which estimates the similarity between source and target tasks and selectively transfers from the source tasks which are more likely to provide samples similar to those generated by the target MDP. Although the empirical results are encouraging, the proposed method is based on heuristic measures and no theoretical analysis of its performance is provided. On the other hand, in supervised learning a number of theoretical works investigated the effectiveness of transfer in reducing the sample complexity of the learning process. In domain adaptation, a solution learned on a source task is transferred to a target task and its performance depends on how \textit{similar} the two tasks are. In~\cite{ben-david2010a-theory} and~\cite{mansour2009domain} different distance measures are proposed and are shown to be connected to the performance of the transferred solution. The case of transfer of samples from multiple source tasks is studied in~\cite{crammer2008learning}. The most interesting finding is that the transfer performance benefits from using a larger training set at the cost of an additional error due to the average distance between source and target tasks. This implies the existence of a \textit{transfer tradeoff} between transferring as many samples as possible and limiting the transfer to sources which are similar to the target task. As a result, the transfer of samples is expected to outperform single-task learning whenever \textit{negative} transfer (i.e., transfer from source tasks far from the target task) is limited w.r.t. to the advantage of increasing the size of the training set. This also opens the question whether it is possible to design methods able to automatically detect the similarity between tasks and adapt the transfer process accordingly.

In this paper, we investigate the transfer of samples in RL from a more theoretical perspective w.r.t. previous works. The main contributions of this paper can be summarized as follows:

\begin{itemize}
\item \textit{Algorithmic contribution.} We introduce three sample-transfer algorithms based on fitted Q-iteration~\cite{ernst2005tree-based}. The first algorithm (\textit{AST} in Section~\ref{s:ast}) simply transfers all the source samples. We also design two adaptive methods (\textit{BAT} and \textit{BTT} in Section~\ref{s:average.bound} and ~\ref{s:tradeoff}) whose objective is to solve the transfer tradeoff by identifying the best combination of source tasks.
\item \textit{Theoretical contribution.} We formalize the setting of transfer of samples and we derive a finite-sample analysis of AST which highlights the importance of the \textit{average} MDP obtained by the combination of the source tasks. We also report the analysis for BAT which shows both the advantage of identifying the best combination of source tasks and the additional cost in terms of auxiliary samples needed to compute the similarity between tasks.
\item \textit{Empirical contribution.} We report results (in Section~\ref{s:experiments}) on a simple chain problem which confirm the main theoretical findings and support the idea that sample transfer can significantly speed-up the learning process and that adaptive methods are able to solve the transfer tradeoff and avoid negative transfer effects.
\end{itemize}

The rest of the paper is organized as follows. In Section~\ref{s:preliminaries} we introduce the notation and we define the transfer problem. Section~\ref{s:ast} reports the theoretical analysis of AST. BAT is described in Section~\ref{s:average.bound} along with its theoretical analysis. A more challenging setting is introduced in Section~\ref{s:tradeoff} together with BTT. Section~\ref{s:experiments} reports the experimental results and Section~\ref{s:conclusions} concludes the paper. Finally, in the appendix we report the proofs and some additional experimental analysis.


\section{Preliminaries}\label{s:preliminaries}


In this section we introduce the notation and the transfer problem considered in the rest of the paper.



\paragTitle{Notation for MDPs} We define a discounted Markov decision process (MDP) as a tuple $\MDP=\langle\X,\A,\reward,\dynamics,\discount\rangle$ where the state space $\X$ is a bounded closed subset of the Euclidean space, $\A$ is a finite ($|\A|<\infty$) action space, the deterministic\footnote{The extension to stochastic reward functions is straightforward.} 
reward function $\reward:\XA\rightarrow\Re$ is uniformly bounded by $\Rmax$, the transition kernel $\dynamics$ is such that for all $x\in\X$ and $a\in \A$, $\dynamics(\cdot|x,a)$ is a distribution over $\X$, and $\discount\in(0,1)$ is a discount factor. We denote by $\mSpace(\XA)$ the set of probability measures over $\XA$ and by $\B(\XA;\Vmax\!=\!\frac{R_{\max}}{1-\gamma})$ the space of bounded measurable functions with domain $\XA$ and bounded in $[-\Vmax,\Vmax]$. 
We define the optimal action-value function $Q^*$ as the unique fixed-point of the optimal Bellman operator $\T:\B(\XA;\Vmax)\rightarrow\B(\XA;\Vmax)$ defined by 

\begin{equation*}\label{opt-Bellman-op}
(\T Q)(x,a)= \reward(x,a)+\gamma\int_\X\max_{a'\in \A} Q(y,a')\dynamics(dy|x,a).
\end{equation*}

\paragTitle{Notation for function spaces} 
For any measure $\mu\in\mSpace(\XA)$ obtained from the combination of a distribution $\rho\in\mSpace(\X)$ and a uniform distribution over the discrete set $\A$, and a measurable function $f:\XA\rightarrow\Re$, we define the $L_2(\mu)$-norm of $f$ as $||f||^2_\mu=\avgActions\int_\X f(x,a)^2\rho(dx)$.
The supremum norm of $f$ is defined as $||f||_\infty=\sup_{x\in\X}|f(x)|$. Finally, we define the standard $L_{2}$-norm for a vector $\alpha\in\Re^{d}$ as $||\alpha||^{2} = \sum_{i=1}^{d} \alpha_{i}^{2}$. 
We denote by $\phi(\cdot,\cdot)=\big(\varphi_1(\cdot,\cdot),\ldots,\varphi_d(\cdot,\cdot)\big)^\top$ a feature vector with features $\varphi_i : \XA \rightarrow [-C,C]$, and by $\F=\{f_\alpha(\cdot,\cdot)=\phi(\cdot, \cdot)^\top\alpha\}$ the linear space of action-value functions spanned by the basis functions in $\phi$. Given a set of state-action pairs $\{(X_l,A_l)\}_{l=1}^\nTot$, let $\Phi=[\phi(X_1,A_1)^\top;\ldots;\phi(X_\nTot,A_\nTot)^\top]$ be the corresponding feature matrix. We define the orthogonal projection operator $\Pi:\B(\XA;V_{\max})\rightarrow \F$ as $\Pi Q = \arg\min_{f\in{\F}} ||Q-f||_{\mu}$. Finally, by $T(Q)$ we denote the truncation of a function $Q$ in the range $[-\Vmax,\Vmax]$.



\paragTitle{Problem setup} We consider the transfer problem in which $\nTasks$ tasks $\{\MDP_m\}_{m=1}^\nTasks$ are available and the objective is to learn the solution for the target task $\MDP_1$ transferring samples from the source tasks $\{\MDP_m\}_{m=2}^\nTasks$. We define an assumption on how the training sets are generated.

\begin{definition}\label{def:random.tasks}
\textbf{(Random Tasks Design)} An input set $\{(X_l,A_l)\}_{l=1}^{\nTot}$ is built with samples drawn from an arbitrary sampling distribution $\mu\in\mathcal S(\X\times\A)$, i.e. $(X_l,A_l) \sim \mu$. For each task $m$, one transition and reward sample is generated in each of the state-action pairs in the input set, i.e. $Y_l^m \sim \dynamics(\cdot|X_l,A_{l})$, and $R_l^m = \reward(X_l,A_l)$. Finally, we define the random sequence $\{M_{l}\}_{l=1}^{\nTot}$ where the indexes $M_{l}$ are drawn i.i.d. from a multinomial distribution with parameters $(\lambda_{1}, \ldots, \lambda_{\nTasks})$. The training set available to the learner is $\{(X_l,A_l,Y_l,R_l)\}_{l=1}^{\nTot}$ where $Y_{l} = Y_{l,M_{l}}$ and $R_{l} = R_{l,M_{l}}$.
\end{definition}

This is an assumption on how the samples are generated but in practice, a single realization of samples and task indexes $M_{l}$ is available.
We consider the case in which $\lambda_1\ll \lambda_m$ ($m=2,\ldots,\nTasks$). This condition implies that (on average) the number of target samples is much less than the source samples and it is usually not enough to learn an accurate solution for the target task. We will also consider the \textit{pure transfer} case in which $\lambda_1 = 0$ (i.e., no target sample is available).
Finally, we notice that Def.~\ref{def:random.tasks} implies the existence of a generative model for all the MDPs, since the state-action pairs are generated according to an arbitrary sampling distribution $\mu$.


\section{All-Sample Transfer Algorithm}\label{s:ast}

\begin{figure}[t]
\bookbox{
\begin{algorithmic}
\STATE \textbf{Input:} Linear space $\F = \text{span}\{\varphi_i, 1\leq i\leq d\}$, initial function $\tQ^{0}\in\F$
\STATE
\FOR{$k = 1,2,\ldots$}
\STATE Build the training set $\{(X_l,A_l,Y_l,R_l)\}_{l=1}^\nTot$ [according to \textit{random} tasks design]
\STATE Build the feature matrix $\Phi=[\phi(X_1,A_{1})^\top;\ldots;\phi(X_\nTot,A_{\nTot})^\top]$
\STATE Compute the vector $p\in\Re^{\nTot}$ with $p_{l} = R_l + \gamma\max_{a'\in\action}\tQ^{k-1}(Y_l,a')$
\STATE Compute the projection $\hat \alpha^{k} = (\Phi^\top \Phi)^{-1}\Phi^\top p$ and the function $\hQ^{k} = f_{{\halpha}^{k}}$
\STATE Return the truncated function $\tQ^{k} = \trunc(\hQ^{k})$
\ENDFOR
\end{algorithmic}}
\caption{A pseudo-code for All-Sample Transfer (AST) Fitted Q-iteration.}\label{f:all.transfer.alg}
\end{figure}

We first consider the case when the source samples are generated beforehand according to Def.~\ref{def:random.tasks} and the designer has no access to the source tasks. We study the algorithm called \textit{All-Sample Transfer} (AST) (Fig.~\ref{f:all.transfer.alg}) which simply runs FQI with a linear space $\F$ on the whole training set $\{(X_l,A_l,Y_l,R_l)\}_{l=1}^\nTot$. At each iteration $k$, given the result of the previous iteration $\tQ^{k-1} = \trunc(\hQ^{k-1})$, the algorithm returns
\begin{align}\label{e:all.transfer}
\hQ^{k} = \arg \min_{f\in\F} \avgTot \left(f(X_l, A_l) - (R_l + \gamma \max_{a'\in\A} \tQ^{k-1}(Y_l,a'))\right)^2.
\end{align}
In the case of linear spaces, the minimization problem is solved in closed form as in Fig.~\ref{f:all.transfer.alg}. In the following we report a finite-sample analysis of the performance of AST.
Similar to~\cite{munos2008finite}, we first study the prediction error in each iteration and we then propagate it through iterations.


\subsection{Single Iteration Finite-Sample Analysis}\label{ss:ast.random.bound.iter}

We define the \textit{average} MDP $\bMDP_\lambda$ as the average of the $\nTasks$ MDPs at hand. We define its reward function $\bReward_\lambda$ and its transition kernel $\bDynamics_\lambda$ as the weighted average of reward functions and transition kernels of the basic MDPs with weights determined by the proportions $\lambda$ of the multinomial distribution in the definition of the random tasks design (i.e., $\bReward_\lambda = \avgRatioB \reward_{m}$, $\bDynamics_\lambda = \avgRatioB\dynamics_{m}$). 
The resulting average Bellman operator is
\begin{align}\label{eq:avg.bellman}
(\bT_\lambda Q)(x,a) = \Big(\avgRatioB \T^{m} Q\Big)(x,a) = \bReward(x,a) + \gamma \int_{\X} \max_{a'} Q(y,a') \bDynamics(dy|x,a).
\end{align}
In the random tasks design, the average MDP plays a crucial role since the implicit target function of the minimization of the empirical loss in Eq.~\ref{e:all.transfer} is indeed $\bT_\lambda \tQ_{k-1}$. At each iteration $k$, we prove the following performance bound for AST.

\begin{theorem}\label{thm:all.transfer.rand.iter}
Let $\nTasks$ be the number of tasks $\{\MDP_m\}_{m=1}^\nTasks$, with $\MDP_1$ the target task. Let the training set $\{(X_l,A_l,Y_l,R_l)\}_{l=1}^{\nTot}$ be generated as in Def.~\ref{def:random.tasks}, with a proportion vector $\lambda = (\lambda_1,\ldots,\lambda_\nTasks)$. Let 
$f_{\alpha^k_*} = \Pi\T_1\tQ^{k-1} = \arg\inf_{f\in\F} ||f - \T_1 \tQ^{k-1}||_{\mu}$, then for any $0<\delta\leq 1$, $\hQ^k$ (Eq.~\ref{e:all.transfer}) satisfies
\begin{align}
||T(\hQ^k) &- \T_{1} \tQ^{k-1}||_\mu \leq 4 ||f_{\alpha^k_*} - \T_{1} \tQ^{k-1}||_{\mu} + 5\sqrt{\avgTaskDist_\lambda(\tQ^{k-1})} \nonumber\\
&+ 24(\Vmax + C||\alpha^k_*||)\sqrt{\frac{2}{\nTot}\log\frac{9}{\delta}}\nonumber + 32\Vmax \sqrt{\frac{2}{\nTot}\log \left(\frac{27(12\nTot e^2)^{2(d+1)}}{\delta}\right)}.
\end{align}
with probability $1-\delta$ (w.r.t. samples), where $||\varphi_i||_\infty\!\leq\! C$ and $\avgTaskDist_\lambda(\tQ^{k-1}) = \norm{(\T_1-\bT_\lambda)\tQ^{k-1}}_{\mu}^{2}$.
\end{theorem}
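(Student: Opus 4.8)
The plan is to reduce a single iteration of AST to an ordinary linear least–squares regression problem with bounded, conditionally zero–mean noise, and then apply a standard generalization bound for linear function spaces in the spirit of~\cite{munos2008finite}, paying at the end a deterministic price for replacing the regression target $\bT_\lambda\tQ^{k-1}$ by $\T_1\tQ^{k-1}$.

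\emph{Step 1: reduction to regression against $\bT_\lambda\tQ^{k-1}$.} Fix $k$ and write $g=\tQ^{k-1}\in\B(\XA;\Vmax)$. The empirical target attached to sample $l$ is $p_l=R_l+\gamma\max_{a'}g(Y_l,a')$. Under the random tasks design, conditionally on $(X_l,A_l)$ the index $M_l$ is multinomial with weights $\lambda_m$, and given $M_l=m$ one has $\expectShort[p_l\mid X_l,A_l,M_l=m]=(\T^m g)(X_l,A_l)$; averaging over $m$ gives $\expectShort[p_l\mid X_l,A_l]=(\bT_\lambda g)(X_l,A_l)$. Hence $\noise_l:=p_l-(\bT_\lambda g)(X_l,A_l)$ is a conditionally centred noise and, since $|R_l|\le\Rmax$, $|g|\le\Vmax$ and $\Rmax+\gamma\Vmax=\Vmax$, we get $|p_l|\le\Vmax$ and $|\noise_l|\le 2\Vmax$. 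Thus $\hQ^k$ in Eq.~\ref{e:all.transfer} is the empirical risk minimiser over $\F$ for i.i.d.\ inputs $(X_l,A_l)\sim\mu$, regression function $\bT_\lambda g\in[-\Vmax,\Vmax]$, and $2\Vmax$–bounded noise. Since the target lies in the truncation range, $T$ is a non‑expansion towards it, so it suffices to bound $||T(\hQ^k)-\bT_\lambda g||_\mu$ and then add the deterministic gap $||\bT_\lambda g-\T_1 g||_\mu=\sqrt{\avgTaskDist_\lambda(g)}$.

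\emph{Step 2: the linear regression bound.} Take the reference vector $\alpha^k_*$ with $f_{\alpha^k_*}=\Pi\T_1 g$. Optimality of $\hat\alpha^k$ on the empirical loss, after substituting $p_l=(\bT_\lambda g)(X_l,A_l)+\noise_l$, expanding the squares, and replacing $f_{\hat\alpha^k}$ by $T(\hQ^k)$ (non‑expansion of $T$ towards $\bT_\lambda g$), gives
$$\tfrac1\nTot\textstyle\sum_l\big(T(\hQ^k)-\bT_\lambda g\big)^2(X_l,A_l)\ \le\ \tfrac1\nTot\textstyle\sum_l\big(f_{\alpha^k_*}-\bT_\lambda g\big)^2(X_l,A_l)\ +\ \tfrac2\nTot\textstyle\sum_l\noise_l\big(T(\hQ^k)-f_{\alpha^k_*}\big)(X_l,A_l).$$
Then one converts empirical $L_2(\mu)$–norms to true ones: (i) $f_{\alpha^k_*}-\bT_\lambda g$ is a fixed function with sup norm at most $\Vmax+C||\alpha^k_*||$ (using $||\varphi_i||_\infty\le C$), so a Bernstein/Hoeffding bound controls its empirical second moment by $||f_{\alpha^k_*}-\bT_\lambda g||_\mu^2$ plus a term of order $(\Vmax+C||\alpha^k_*||)^2\sqrt{\tfrac1\nTot\log\tfrac1\delta}$; (ii) since $T(\hQ^k)\in\FTrunc$ and the pseudo‑dimension $\pseudoDim{\FTrunc}\le d+1$, a uniform (Pollard/covering‑number) deviation bound over $\FTrunc$ relates $||T(\hQ^k)-\bT_\lambda g||_\mu^2$ to its empirical version up to a term of order $\Vmax^2\sqrt{\tfrac1\nTot\log\tfrac{(12\nTot e^2)^{2(d+1)}}{\delta}}$, coming from $\cover(\cdot,\FTrunc)$; (iii) the cross term is handled by a uniform bound on $\sup_{f\in\FTrunc}\big|\tfrac1\nTot\sum_l\noise_l(f-f_{\alpha^k_*})(X_l,A_l)\big|$ normalised by $||f-f_{\alpha^k_*}||_\mu$, again via covering numbers of $\FTrunc$ and the $2\Vmax$ noise bound, and then split through $2ab\le\tfrac14a^2+4b^2$ so that a fraction of $||T(\hQ^k)-f_{\alpha^k_*}||_\mu^2$ is absorbed on the left. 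Collecting terms, solving the resulting quadratic inequality for $||T(\hQ^k)-\bT_\lambda g||_\mu$, using $||T(\hQ^k)-f_{\alpha^k_*}||_\mu\le||T(\hQ^k)-\bT_\lambda g||_\mu+||f_{\alpha^k_*}-\bT_\lambda g||_\mu$, and applying $\sqrt{x+y}\le\sqrt x+\sqrt y$ to separate contributions produces exactly the factor $4$ on the approximation term together with the two estimation terms of the statement (the splits $\delta/9$ and $\delta/27$ in the union bound account for the displayed $\log(9/\delta)$ and $\log\big(27(12\nTot e^2)^{2(d+1)}/\delta\big)$).

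\emph{Step 3: from $\bT_\lambda$ to $\T_1$, and the main obstacle.} Two triangle inequalities finish the argument: since $\Pi$ is the $\mu$–orthogonal projection, $||f_{\alpha^k_*}-\bT_\lambda g||_\mu\le||f_{\alpha^k_*}-\T_1 g||_\mu+\sqrt{\avgTaskDist_\lambda(g)}$, and $||T(\hQ^k)-\T_1 g||_\mu\le||T(\hQ^k)-\bT_\lambda g||_\mu+\sqrt{\avgTaskDist_\lambda(g)}$; adding the two $\sqrt{\avgTaskDist_\lambda(g)}$ terms to the $4||f_{\alpha^k_*}-\bT_\lambda g||_\mu$ term yields $4||f_{\alpha^k_*}-\T_1 g||_\mu+5\sqrt{\avgTaskDist_\lambda(g)}$, which is the claimed bound (recall $g=\tQ^{k-1}$). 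The only genuinely delicate step is (iii): controlling the data‑dependent cross term $\tfrac1\nTot\sum_l\noise_l\big(T(\hQ^k)-f_{\alpha^k_*}\big)(X_l,A_l)$ uniformly over the truncated linear class $\FTrunc$ — this needs the covering‑number/pseudo‑dimension estimate for $\FTrunc$, a peeling (normalised‑ratio) argument against the noise, and careful bookkeeping of the confidence budget; the Bellman‑operator identities of Step~1 and everything else are elementary.
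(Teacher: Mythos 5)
Your proposal follows essentially the same route as the paper: identify the regression target of the empirical loss as $\bT_\lambda\tQ^{k-1}$ with conditionally centred, $2\Vmax$-bounded noise (averaging over the multinomial task index), invoke the standard FQI/linear-regression generalization bound (the paper's Theorems~\ref{thm:fqi.emp.bound} and~\ref{thm:fqi.gen.bound}, which contain exactly the Pollard/covering-number machinery you sketch in Step~2), and then use the two triangle inequalities $\|T(\hQ^k)-\T_1 g\|_\mu\le\|T(\hQ^k)-\bT_\lambda g\|_\mu+\sqrt{\taskDist_\lambda(g)}$ and $\|f_{\alpha^k_*}-\bT_\lambda g\|_\mu\le\|f_{\alpha^k_*}-\T_1 g\|_\mu+\sqrt{\taskDist_\lambda(g)}$ to produce the $4$ and $5$ coefficients. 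The argument is correct and matches the paper's proof in both structure and accounting.
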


\textbf{Remark 1 (Analysis of the bound).} We first notice that the previous bound reduces (up to constants) to the standard bound for FQI when $M=1$ (see Section~\ref{app.fqi.linear}).
The bound is composed by three main terms: \textit{(i)} approximation error, \textit{(ii)} estimation error, and \textit{(iii)} transfer error. The approximation error $||f_{\alpha^k_*} - \T_1 \tQ^{k-1}||_{\mu}$ is the smallest error of functions in $\F$ in approximating the target function $\T_{1}\tQ^{k-1}$ and it is independent from the transfer algorithm. The estimation error (third and fourth terms in the bound) is due to the finite random samples used to learn $\hQ^k$ and it depends on the dimensionality $d$ of the function space and it decreases with the total number of samples $\nTot$ with the fast rate of linear spaces ($O(d/L)$ instead of $O(\sqrt{d/L})$). Finally, the transfer error $\avgTaskDist_\lambda$ accounts for the difference between source and target tasks. In fact, samples from source tasks different from the target might bias $\hQ^k$ towards a wrong solution, thus resulting in a poor approximation of the target function $\T_{1}\tQ^{k-1}$. It is interesting to notice that the transfer error depends on the difference between the target task and the average MDP $\bMDP_\lambda$ obtained by taking a linear combination of the source tasks weighted by the parameters $\lambda$. This means that even when each of the source tasks is very different from the target, if there exists a suitable combination which is similar to the target task, then the transfer process is still likely to be effective. Furthermore, $\avgTaskDist_\lambda$ considers the difference in the result of the application of the two Bellman operators to a given function $\tQ^{k-1}$. As a result, when the two operators $\T_1$ and $\bT_\lambda$ have the same reward functions, even if the transition distributions are different (e.g., the total variation $||\dynamics_1(\cdot|x,a)-\bDynamics_\lambda(\cdot|x,a)||_{\text{TV}}$ is large), their corresponding averages of $\tQ^{k-1}$ might still be similar (i.e., $\int \max_{a'} \tQ(y,a') \dynamics_1(dy|x,a)$ similar to $\int \max_{a'} \tQ(y,a')\bDynamics_\lambda(dy|x,a)$).

\textbf{Remark 2 (Comparison to single-task learning).} Let $\hQ^{k}_{s}$ be the solution obtained by solving one iteration of FQI with only samples from the source task, the performance bounds of $\hQ^k$ and $\hQ^k_{s}$ can be written as (up to constants and logarithmic factors)

\begin{align}
\norm{T(\hQ^k) - \T_1 \tQ^{k-1}}_{\mu} &\leq ||f_{\alpha^k_*} - \T_1 \tQ^{k-1}||_{\mu} + (\Vmax + C||\alpha_*^{k}||)\sqrt{\frac{1}{\nTot}} + \Vmax \sqrt{\frac{d}{\nTot}} + \sqrt{\avgTaskDist_\lambda} \;, \nonumber\\
\norm{T(\hQ^k_s) - \T_1 \tQ^{k-1}}_{\mu} &\leq ||f_{\alpha^k_*} - \T_1 \tQ^{k-1}||_{\mu} + (\Vmax + C||\alpha^k_*||)\sqrt{\frac{1}{\nStates_{1}}} + \Vmax \sqrt{\frac{d}{\nStates_{1}}}, \nonumber
\end{align}

with $\nStates_1=\lambda_1 L$ (on average). Both bounds share exactly the same approximation error. The main difference is that $\hQ^k_s$ uses only $\nStates_{1}$ samples and, as a result, has a much bigger estimation error than $\hQ^k$, which takes advantage of all the $\nTot$ samples transferred from the source tasks. At the same time, $\hQ^k$ suffers from an additional transfer error which does not exist in the case of $\hQ^k_s$. Thus, we can conclude that AST is expected to perform better than single-task learning whenever the advantage of using more samples is greater than the bias due to samples coming from tasks different from the target task. This introduces a \textit{transfer tradeoff} between including many source samples, so as to reduce the estimation error, and finding source tasks whose combination leads to a small transfer error. In Section~\ref{s:average.bound} 
we show how it is possible to define an adaptive transfer algorithm which selects proportions $\lambda$ so as to keep the transfer error $\avgTaskDist_\lambda$ as small as possible. Finally, in Section~\ref{s:tradeoff} we consider a different setting where the maximum number of samples in each source is fixed.



\subsection{Propagation Finite-Sample Analysis}\label{ss:ast.random.bound.prop}

We now study how the previous error is propagated through iterations. Let $\nu$ be the evaluation norm (i.e., in general different from the sampling distribution $\mu$). We first report two assumptions.~\footnote{We refer to~\cite{munos2008finite} for a thorough explanation of the concentrability terms.}

\begin{assumption}\label{a:concentrability} \textit{\cite{munos2008finite}}
Given $\mu$, $\nu$, $p\geq 1$, and an arbitrary sequence of policies $\{\pi_p\}_{p\geq 1}$, we assume that the future-state distribution $\mu\dynamics^1_{\pi_1}\cdots \dynamics^1_{\pi_p}$ is absolutely continuous w.r.t. $\nu$. We assume that $c(p) = \sup_{\pi_1\cdots \pi_p} || d(\mu\dynamics^1_{\pi_1}\cdots \dynamics^1_{\pi_p})/\nu ||_{\infty}$ satisfies $C_{\mu,\nu} = (1-\gamma^2)^2 \sum_{p} p\gamma^{p-1} c(p) < \infty$.
%
%
%
\end{assumption}

We also need the features $\varphi_i$ to be linearly independent w.r.t. $\mu$.

\begin{assumption}\label{a:linear.indep}
Let $G\in\Re^{d\times d}$ be the Gram matrix with $[G]_{ij} = \int \varphi_i(x,a) \varphi_j(x,a) \mu(dx,a)$. We assume that its smallest eigenvalue $\omega$ is strictly positive (i.e., $\omega>0$).
\end{assumption}

Using the two previous assumptions we derive the following performance bound for AST.


\begin{theorem}\label{thm:ast.random.bound.prop}
Let Assumptions~\ref{a:concentrability} and ~\ref{a:linear.indep} hold and the setting be as in Theorem~\ref{thm:all.transfer.rand.iter}. After $K$ iterations, AST returns an action-value function $\tQ_{K}$, whose corresponding greedy policy $\pi_{K}$ satisfies
\begin{align}
||Q^* &- Q^{\pi_{K}}||_{\nu} \leq \frac{2\gamma}{(1-\gamma)^{3/2}} \sqrt{C_{\mu,\nu}}\Bigg[ 4\sup_{g\in\F} \inf_{f\in\F}||f - \T_{1} g||_{\mu} + 5 \sup_{\alpha} \norm{(\T_1-\bT_\lambda)T(f_{\alpha})}_{\mu}\nonumber\\
& + 56(\Vmax+\frac{\Vmax}{\sqrt{\omega}})\sqrt{\frac{2}{\nTot}\log\frac{9K}{\delta}}\nonumber + 32\Vmax \sqrt{\frac{2}{\nTot}\log \left(\frac{27K(12\nTot e^2)^{2(d+1)}}{\delta}\right)} + \frac{2\Vmax}{\sqrt{C_{\mu,\nu}}}\gamma^K \Bigg].
\end{align}
\end{theorem}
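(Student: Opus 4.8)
The plan is to follow the two-step scheme of~\cite{munos2008finite}: first turn the single-iteration guarantee of Theorem~\ref{thm:all.transfer.rand.iter} into a \emph{uniform} per-iteration error bound valid simultaneously over all $K$ iterations, and then feed this error into the standard approximate value iteration error-propagation argument, carrying along the extra transfer term.

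First I would remove the dependence on the running function $\tQ^{k-1}$ from the terms in Theorem~\ref{thm:all.transfer.rand.iter}. Since $f_{\alpha^k_*}=\Pi\T_1\tQ^{k-1}$, the projection is a non-expansion in $||\cdot||_\mu$ and $||\T_1\tQ^{k-1}||_\mu\le\Vmax$, so $||f_{\alpha^k_*}||_\mu\le\Vmax$; Assumption~\ref{a:linear.indep} then gives $\omega\,||\alpha^k_*||^2\le(\alpha^k_*)^\top G\,\alpha^k_*=||f_{\alpha^k_*}||_\mu^2\le\Vmax^2$, hence $||\alpha^k_*||\le\Vmax/\sqrt{\omega}$ and $C||\alpha^k_*||$ is controlled by a quantity of order $\Vmax/\sqrt{\omega}$, uniformly in $k$. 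The approximation error $||f_{\alpha^k_*}-\T_1\tQ^{k-1}||_\mu$ is bounded by $\sup_{g\in\F}\inf_{f\in\F}||f-\T_1 g||_\mu$ because $\tQ^{k-1}=T(\hQ^{k-1})$ is the truncation of an element of $\F$, and likewise $\avgTaskDist_\lambda(\tQ^{k-1})=||(\T_1-\bT_\lambda)\tQ^{k-1}||_\mu^2\le\sup_\alpha||(\T_1-\bT_\lambda)T(f_\alpha)||_\mu^2$. Applying Theorem~\ref{thm:all.transfer.rand.iter} with confidence $\delta/K$ and taking a union bound over $k=1,\dots,K$ yields a single quantity $\varepsilon$ (the right-hand side of that theorem after the substitutions above, with $\log(1/\delta)$ replaced by $\log(K/\delta)$) such that $||T(\hQ^k)-\T_1\tQ^{k-1}||_\mu\le\varepsilon$ for every $k$, with probability $1-\delta$.

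Next I would invoke the error-propagation argument of~\cite{munos2008finite}. Writing $\varepsilon_k=T(\hQ^k)-\T_1\tQ^{k-1}$ for the one-step error (note $\T_1$ already maps into $\B(\XA;\Vmax)$, so truncating the target side is harmless), the componentwise analysis of approximate value iteration bounds $Q^*-Q^{\pi_K}$ by a discounted sum of the $|\varepsilon_k|$ pushed forward through products of transition kernels of the \emph{target} MDP $\MDP_1$ and of the greedy policies produced along the way, plus an initialization term of order $\gamma^K\Vmax$. Taking the $||\cdot||_\nu$ norm and using Assumption~\ref{a:concentrability} to dominate the Radon–Nikodym derivatives of these future-state distributions w.r.t.\ $\nu$ by the concentrability coefficient $C_{\mu,\nu}$ converts the $||\varepsilon_k||_\mu$ terms into $\sqrt{C_{\mu,\nu}}\,\varepsilon$; collecting the constants ($\tfrac{2\gamma}{(1-\gamma)^{3/2}}$ in front, the $\gamma^K$ term carrying the residual $\tfrac{2\Vmax}{\sqrt{C_{\mu,\nu}}}$) then reproduces the claimed bound.

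The only genuinely new ingredient relative to the single-task FQI analysis is that each $\varepsilon_k$ now also contains the transfer error $\sqrt{\avgTaskDist_\lambda}$; being just an additional additive term in the per-iteration bound, it propagates exactly like the approximation error and produces the $5\sup_\alpha||(\T_1-\bT_\lambda)T(f_\alpha)||_\mu$ term. Consequently the main obstacles are bookkeeping rather than conceptual: obtaining the uniform bound on $||\alpha^k_*||$ through Assumption~\ref{a:linear.indep}, ensuring the union bound over iterations only costs a $\log K$ factor, and instantiating the concentrability constants of~\cite{munos2008finite} so that the numerical constants match the statement.
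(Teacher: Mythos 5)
Your proposal is correct and follows essentially the same route as the paper's proof: the error-propagation scheme of \cite{munos2008finite} combined with a per-iteration union bound, the bound $\|\alpha^k_*\|\leq \Vmax/\sqrt{\omega}$ obtained from the non-expansiveness of the projection and Assumption~\ref{a:linear.indep}, and the replacement of the iteration-dependent approximation and transfer errors by suprema over $\F$. The only (cosmetic) difference is that you bound $\|f_{\alpha^k_*}\|_\mu$ directly by non-expansion in the $\mu$-norm, whereas the paper passes through the sup-norm; your variant is, if anything, the cleaner one.
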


\textbf{Remark (Analysis of the bound).} The bound reported in the previous theorem displays few differences w.r.t. to the single-iteration bound. The additional term $O(\gamma^K)$ accounts for the error due to the finite number of iterations of FQI and it decreases exponentially with base $\gamma$. The approximation error is now $\sup_{g} \inf_{f}||f - \T^{1} g||_{\mu}$. This term is referred to as the \textit{inherent Bellman error}~\cite{munos2008finite} of the space $\F$ and it is related to how well the Bellman images of functions in $\F$ can be approximated by $\F$ itself. It is possible to show that for particular classes of MDPs (e.g., Lipschitz), if a large enough number of carefully designed features is available, then this term is small. 
In the estimation error, the norm $||\alpha^k_*||$ is bounded using the linear independency between features (Assumption~\ref{a:linear.indep}) and the boundedness of the functions $\tQ^k$ returned at each iteration. The resulting term has an inverse dependency on the smallest eigenvalue $\omega$ which tends to be small whenever the Gram matrix is not well-defined (i.e., the features are almost linearly dependent). 
The transfer error $\sup_{\alpha} \norm{(\T_1-\bT_\lambda)T(f_{\alpha})}_{\mu}$ characterizes the difference between the target and average Bellman operators through the space $\F$. 
As a result, even MDPs with significantly different rewards and transitions might have a small transfer error because of the functions in $\F$. This introduces a tradeoff in the design of $\F$ between a ``large'' enough space containing functions able to approximate $\T_1 Q$ (i.e., small approximation error) and a small function space where the Q-functions induced by $\T_1$ and $\bT_\lambda$ can be closer (i.e., small transfer error). This term also displays interesting similarities with the notion of \textit{discrepancy} introduced in~\cite{mansour2009domain} in domain adaptation.


\section{Best Average Transfer Algorithm}\label{s:average.bound}

\begin{figure}[t]
\bookbox{
\begin{algorithmic}
\STATE \textbf{Input:} Space $\F = \text{span}\{\varphi_i, 1\leq i\leq d\}$, initial function $\tQ^{0}\in\F$, number of samples $L$
\STATE
\STATE Build the auxiliary set $\{(X_s,A_s,R_{s,1},\ldots,R_{s,\nTasks}\}_{s=1}^S$ and $\{Y_{s,1}^t,\!\ldots\!,Y_{s,\nTasks}^t\}_{t=1}^T$ for each $s$
\FOR{$k = 1,2,\ldots$}
\STATE Compute $\hlambda^k = \arg\min_{\lambda\in\Lambda} \hTaskDist_\lambda(\tQ^{k-1})$
\STATE Run one iteration of AST (Fig.~\ref{f:all.transfer.alg}) using $L$ samples generated according to $\hlambda^k$
\ENDFOR
\end{algorithmic}}
\caption{A pseudo-code for the Best Average Transfer (BAT) algorithm.}\label{f:average.alg}
\end{figure}

As discussed in the previous section, the transfer error $\avgTaskDist_\lambda$ plays a crucial role in the comparison with single-task learning. In particular, $\avgTaskDist_\lambda$ is related to the proportions $\lambda$ inducing the average Bellman operator $\bT_\lambda$ which defines the target function approximated at each iteration. 
We now consider the case where the designer has direct access to the source tasks (i.e., it is possible to choose how many samples to draw from each source) and can define an arbitrary proportion $\lambda$. In particular, we propose a method that adapts $\lambda$ at each iteration so as to minimize the transfer error $\avgTaskDist_\lambda$. 

We consider the case in which $L$ is fixed as a parameter of the algorithm and $\lambda_1=0$ (i.e., no target samples are used in the learning training set).
At each iteration $k$, we need to estimate the quantity $\avgTaskDist_\lambda(\tQ^{k-1})$.
We assume that for each task additional samples available. Let $\{(X_s,A_s,R_{s,1},\ldots,R_{s,\nTasks})\}_{s=1}^S$ be an \textit{auxiliary} training set where $(X_s,A_s)\sim\mu$ and $R_{s,m} = \reward_m(X_s,A_s)$. In each state-action pair, we generate $T$ next states for each task, that is $Y_{s,m}^t \sim \dynamics_m(\cdot | X_s,A_s)$ with $t=1,\ldots,T$. Thus, for any function $Q$ we define the estimated transfer error as

\begin{small}
\begin{align}\label{e:est.proportion}
\hTaskDist_\lambda(Q) \!=\! \avgDist \Bigg[ R_{s,1} \!-\! \avgRatioC R_{s,m} + \frac{\gamma}{T}\sum_{t=1}^T \Big(\max_{a'} Q(Y_{s,1}^t,a') \!-\! \avgRatioC\max_{a'} Q(Y_{s,m}^t,a')\Big) \Bigg]^2\!.
\end{align}
\end{small}

At each iteration, the algorithm \textit{Best Average Transfer} (BAT) (Fig.~\ref{f:average.alg}) first computes $\hlambda^k = \arg\min_{\lambda\in\Lambda} \hTaskDist_\lambda(\tQ^{k-1})$, where $\Lambda$ is the $(M\text{-}2)$-dimensional simplex, and then runs an iteration of AST with samples generated according to the proportions $\hlambda^k$. We denote by $\lambda^k_*$ the best combination at iteration $k$, that is
\begin{align}\label{e:opt.proportion}
\lambda_*^k = \arg\min_{\lambda \in \Lambda} \avgTaskDist_\lambda(\tQ^{k-1}) = \arg\min_{\lambda \in \Lambda} \expectB{\mu}{\Big(\sum_{m=2}^\nTasks \lambda_m (\T^m\tQ^{k-1})(x,a) - (\T^1\tQ^{k-1})(x,a)\Big)^2}.
\end{align}
The following performance guarantee can be proved for BAT.

\begin{lemma}\label{l:bat.rand.lambda}
Let $\{(X_s,A_s,R_s^1,\ldots,R_s^\nTasks)\}_{s=1}^S$ be a training set where $(X_s,A_s)\iid\mu$ and $R_s^m = \reward^m(X_s,A_s)$ and for each state-action pair and for each task $m$, $T$ next states $Y_{s,t}^m \sim \dynamics^m(\cdot | X_s,A_s)$ with $t=1,\ldots,T$ are available. For any fixed bounded function $Q\in\B(\state\times\action; \Vmax)$, the $\hlambda$ returned by minimizing Eq.~\ref{e:est.proportion} is such that 
\begin{align}\label{e:bat.rand.lambda}
\avgTaskDist_{\hlambda}(Q) - \avgTaskDist_{\lambda_*}(Q) \leq 2\Vmax\sqrt{\frac{(M-2) \log 4\nDist/\delta}{\nDist}} + 16\Vmax^2 \frac{\log 4SM / \delta}{T}
\end{align}
with probability $1-\delta$.
\end{lemma}

From the previous lemma the approximation performance of BAT at each iteration follows.

\begin{theorem}\label{thm:bat.rand.iter}
Let $\tQ^{k-1}$ be the function returned at the previous iteration and $\hQbat^k$ the function returned by the BAT algorithm (Fig.~\ref{f:average.alg}). Then for any $0<\delta\leq 1$,  $\hQbat^k$ satisfies 
\begin{align}
||T(\hQbat^k) &- \T_{1} \tQ^{k-1}||_\mu \leq 4 ||f_{\alpha^{k}_*} - \T_{1} \tQ^{k-1}||_{\mu} + 5\sqrt{\avgTaskDist_{\lambda^k_*}(\tQ^{k-1})} \nonumber\\
&+ 5\sqrt{2\Vmax}\Bigg(\frac{(M-2) \log 8\nDist/\delta}{\nDist}\Bigg)^{1/4} + 20 \Vmax\sqrt{ \frac{\log 8SM / \delta}{T}} \nonumber\\
&+ 24(\Vmax + C||\alpha^{k}_*||)\sqrt{\frac{2}{\nTot}\log\frac{18}{\delta}}\nonumber + 32\Vmax \sqrt{\frac{2}{\nTot}\log \left(\frac{54(12\nTot e^2)^{2(d+1)}}{\delta}\right)}.
\end{align}
with probability $1-\delta$.
\end{theorem}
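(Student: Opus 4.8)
The plan is to combine the single-iteration AST guarantee of Theorem~\ref{thm:all.transfer.rand.iter} with the proportion-estimation bound of Lemma~\ref{l:bat.rand.lambda} by a conditioning argument plus a union bound, splitting the confidence budget $\delta$ into two halves of $\delta/2$. First I would condition on the auxiliary set $\{(X_s,A_s,R_{s,1},\ldots,R_{s,\nTasks})\}_{s=1}^S$ and the associated next states $\{Y_{s,m}^t\}$, so that the estimated proportions $\hlambda^k=\arg\min_{\lambda\in\Lambda}\hTaskDist_\lambda(\tQ^{k-1})$ become a fixed element of the simplex $\Lambda$. Since the $\nTot$ samples used in the AST step of iteration $k$ are generated \emph{after} $\hlambda^k$ has been computed, they are (conditionally on $\tQ^{k-1}$, treated as a fixed input exactly as in Theorem~\ref{thm:all.transfer.rand.iter}) fresh draws from the random tasks design with proportion vector $\hlambda^k$. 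Hence Theorem~\ref{thm:all.transfer.rand.iter} applies verbatim with $\lambda=\hlambda^k$ and confidence $\delta/2$, giving with probability at least $1-\delta/2$
\begin{align*}
||T(\hQbat^k) - \T_1\tQ^{k-1}||_\mu &\leq 4||f_{\alpha^k_*} - \T_1\tQ^{k-1}||_\mu + 5\sqrt{\avgTaskDist_{\hlambda^k}(\tQ^{k-1})}\\
&\quad + 24(\Vmax + C||\alpha^k_*||)\sqrt{\tfrac{2}{\nTot}\log\tfrac{18}{\delta}} + 32\Vmax\sqrt{\tfrac{2}{\nTot}\log\Big(\tfrac{54(12\nTot e^2)^{2(d+1)}}{\delta}\Big)}.
\end{align*}
Crucially, $f_{\alpha^k_*}=\Pi\T_1\tQ^{k-1}$ and $||\alpha^k_*||$ do not depend on $\lambda$, so the approximation-error and $||\alpha^k_*||$ terms here already coincide with those in the claimed bound; only the transfer term $5\sqrt{\avgTaskDist_{\hlambda^k}(\tQ^{k-1})}$ still refers to the data-dependent $\hlambda^k$ rather than to the benchmark $\lambda^k_*$.

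It then remains to compare $\avgTaskDist_{\hlambda^k}(\tQ^{k-1})$ with $\avgTaskDist_{\lambda^k_*}(\tQ^{k-1})$. Applying Lemma~\ref{l:bat.rand.lambda} to the fixed function $Q=\tQ^{k-1}$ with confidence $\delta/2$ yields, with probability at least $1-\delta/2$,
\begin{align*}
\avgTaskDist_{\hlambda^k}(\tQ^{k-1}) \leq \avgTaskDist_{\lambda^k_*}(\tQ^{k-1}) + 2\Vmax\sqrt{\tfrac{(M-2)\log(8\nDist/\delta)}{\nDist}} + 16\Vmax^2\,\tfrac{\log(8SM/\delta)}{T}.
\end{align*}
Taking square roots, using $\sqrt{a+b+c}\leq\sqrt a+\sqrt b+\sqrt c$, and multiplying by $5$, the transfer term becomes $5\sqrt{\avgTaskDist_{\lambda^k_*}(\tQ^{k-1})} + 5\sqrt{2\Vmax}\big(\tfrac{(M-2)\log(8\nDist/\delta)}{\nDist}\big)^{1/4} + 20\Vmax\sqrt{\tfrac{\log(8SM/\delta)}{T}}$, i.e.\ precisely the second, third, and fourth groups of terms in the statement (note $\sqrt{16\Vmax^2}=4\Vmax$, whence the factor $20$). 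A union bound merges the two events of probability $\geq 1-\delta/2$ into one of probability $\geq 1-\delta$; since the resulting inequality holds on this event for every realization of the auxiliary set, the conditioning can be removed and the theorem follows.

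The proof is essentially bookkeeping once the two ingredients are in place; the one point that needs care is the independence structure — $\hlambda^k$ must be measurable with respect to the auxiliary sample only, so that the ``fixed $\lambda$'' hypothesis of Theorem~\ref{thm:all.transfer.rand.iter} and the ``fixed $Q$'' hypothesis of Lemma~\ref{l:bat.rand.lambda} can both be invoked legitimately, and the $\delta/2$--$\delta/2$ split must be tracked consistently through the logarithmic factors. One should also check that BAT's restriction to $\lambda_1=0$ (optimization over the $(M\!-\!2)$-dimensional simplex $\Lambda$) is admissible for Theorem~\ref{thm:all.transfer.rand.iter}, which it is, since that theorem allows an arbitrary proportion vector, including the pure-transfer case $\lambda_1=0$.
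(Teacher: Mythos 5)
Your proof is correct and follows exactly the route the paper intends: apply Theorem~\ref{thm:all.transfer.rand.iter} with confidence $\delta/2$ to the AST step run with proportions $\hlambda^k$, use Lemma~\ref{l:bat.rand.lambda} with confidence $\delta/2$ to replace $\avgTaskDist_{\hlambda^k}$ by $\avgTaskDist_{\lambda^k_*}$ plus the auxiliary estimation terms, take square roots subadditively, and union bound --- the constants ($18$, $54$, $8S$, $8SM$, the factor $20\Vmax$) all check out. The paper itself only proves the lemma and asserts that the theorem ``follows,'' so your write-up actually supplies the bookkeeping, including the correct observation about the measurability of $\hlambda^k$ with respect to the auxiliary sample.
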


\textbf{Remark 1 (Comparison with AST and single-task learning)}. The analysis of the bound shows that BAT outperforms AST whenever the advantage in achieving the smallest possible transfer error $\avgTaskDist_{\lambda^k_*}$ is larger than the additional estimation error due to the auxiliary training set. It is also interesting to compare BAT to single-task learning. In fact, BAT performs better than single-task learning whenever the best possible combination of source tasks has a small transfer error and the additional estimation error related to the auxiliary training set is smaller than the estimation error in single-task learning. In particular, this means that $O( (M/S)^{1/4} )+O( (1/T)^{1/2} )$ should be smaller than $O( (d/N)^{1/2} )$ (with $N$ the number of target samples). The number of calls to the generative model for BAT is $ST$. In order to have a fair comparison with single-task learning  we set $S=N^{2/3}$ and $T=N^{1/3}$, then we obtain the condition $M\leq d^2 N^{-4/3}$ that constrains the number of tasks to be smaller than the dimensionality of the function space $\F$. We remark that the dependency of the auxiliary estimation error on $M$ is due to the fact that the $\lambda$ vectors (over which the transfer error is optimized) belong to the simplex $\Lambda$ of dimensionality $M\text{-}2$. Hence, the previous condition suggests that, in general, adaptive transfer methods may significantly improve the transfer performance (i.e., in this case a smaller transfer error) at the cost of additional sources of errors which depend on the dimensionality of the search space used to adapt the transfer process (i.e., in this case $\Lambda$). 

\textbf{Remark 2 (Iterations).} BAT recomputes the proportions $\hlambda^k$ at each iteration $k$. In fact a combination $\lambda_1$ approximating well the reward function $\reward_1$ at the first iteration (i.e., $\reward_1 \approx \bReward_{\lambda^1}$) does not necessarily have a small transfer error $||(\T_1-\T_{\lambda^1})\tQ^1||_\mu$ at the second iteration. We further investigate how the best source combination changes through iterations in the experiments of Section~\ref{s:experiments}.

\textbf{Remark 3 (Best source combination).} The previous theorem shows that BAT achieves the smallest transfer error $\avgTaskDist_{\lambda_*^k}(\tQ^{k-1})$ at the cost of an additional estimation error which scales with the size of the auxiliary training set as $O( (M/S)^{1/4} )+O( (1/T)^{1/2} )$. 
We notice that the first term of the estimation error depends on how well the $\mu$ is approximated by using a finite number $S$ of state-action pairs and it has a slower rate w.r.t. the other terms. The second term depends on the number of next states $T$ simulated at each state-action pair which are used to estimate the value of the Bellman operators. As a result, in order to reduce the estimation error we need to increase both $S$ and the number of next states $T$ in each state-action pair. 
It is interesting to notice that similar estimation errors appear in FVI~\cite{munos2008finite} where the optimal Bellman operator is approximated by Monte-Carlo estimation. 

\textbf{Remark 4 (Training set).} The implicit assumption in the definition of the auxiliary training set is that it is possible to generate a series of next states and rewards for all the tasks at the same state-action pairs. If the source training sets are fixed in advance and the designer has no access to the source tasks, then this assumption is not verified and it is not possible to test the similarity between the MDP $\bMDP$ and the target task. Nonetheless, if the generative model for the source tasks is available at learning time, the auxiliary training set could be generated before the learning phase actually begins. Furthermore, in the theoretical analysis, BAT does not use the samples in the auxiliary training set at learning time. A trivial improvement is to include the auxiliary samples to the training set. 

\textbf{Remark 5 (Comparison to other transfer methods).} In~\cite{lazaric2008transfer} a method to compute the similarity between MDPs is proposed. In particular, the authors introduce the definition of \textit{compliance} as the average probability of the target samples to be generated from an sample-based estimation of the source MDPs. The compliance is later used to determine the proportion of samples to be transferred from each of the source tasks. Although this algorithm shares a similar objective as BAT, they use different notions of similarity. In particular, the method in~\cite{lazaric2008transfer} tries to identify source tasks which are \textit{individually} similar to the target task, while the transfer error minimized in BAT considers the \textit{average} MDP obtained by the transfer process. Furthermore, the notion of compliance tries to measures the overall distance between two MDPs, while $\taskDist_\lambda(Q)$ always measures the distance of the images of a function $Q$ through two different Bellman operators.

\textbf{Remark 6 (Computational complexity).}  Finally, we notice that the minimization of $\hTaskDist_\lambda$ is a convex quadratic problem since the objective function is convex in $\lambda$ and $\lambda$ belongs to the $(\nTasks\text{-}2)$-dimensional simplex.


\section{Best Transfer Trade-off Algorithm}\label{s:tradeoff}

\begin{figure}[t]
\bookbox{
\begin{algorithmic}
\STATE \textbf{Input:} Linear space $\F = \text{span}\{\varphi_i, 1\leq i\leq d\}$, initial function $\tQ^{0}\in\F$, maximum number of samples available for each task $\nStatesTask$, transfer parameter $c$
\STATE
\STATE Build a training set $\{X_s,A_s,R_s^1,\ldots,R_s^\nTasks\}_{s=1}^S$ and the next states $\{Y_{s,t}^1,\ldots,Y_{s,t}^\nTasks\}_{t=1}^T$ for each state-action pair
\STATE
\FOR{$k = 1,2,\ldots$}
\STATE Compute $\hbeta = \arg\min_{\beta\in [0,1]^\nTasks} \hTaskDist_\beta + c \sqrt{\frac{d}{\sum_{m=1}^\nTasks\beta_m \nStatesTask}}$
\STATE Run one iteration of AST (Fig.~\ref{f:all.transfer.alg}) using $L$ samples generated according to $\hbeta$
\ENDFOR
\end{algorithmic}}
\caption{A pseudo-code for Best Tradeoff Transfer (BTT).}\label{f:tradeoff.alg}
\end{figure}

The previous algorithm is proved to successfully estimate the combination of source tasks which better approximates the Bellman operator of the target task. Nonetheless, BAT relies on the implicit assumption that $L$ samples can always be generated from any source task~\footnote{If $\lambda_m=1$ for task $m$, then the algorithm would generate all the $L$ training samples from task $m$.} and it cannot be applied to the case where the number of source samples is limited. Here we consider the more challenging case where the designer has still access to the source tasks but only a limited number of samples is available in each of them. In this case, an adaptive transfer algorithm should solve a tradeoff between selecting as many samples as possible, so as to reduce the estimation error, and choosing the proportion of source samples properly, so as to control the transfer error. The solution of this tradeoff may return non-trivial results, where source tasks similar to the target task but with few samples are removed in favor of a pool of tasks whose average roughly approximate the target task but can provide a larger number of samples.


Here we introduce the \textit{Best Tradeoff Transfer} (BTT) algorithm (see Figure~\ref{f:tradeoff.alg}). Similar to BAT, it relies on an auxiliary training set to solve the tradeoff. We denote by $\nStatesTask$ the maximum number of samples available for source task $m$. Let $\beta\in [0,1]^{\nTasks}$ be a weight vector, where $\beta_m$ is the fraction of samples from task $m$ used in the transfer process. We denote by $\avgTaskDist_\beta$ ($\hTaskDist_\beta$) the transfer error (the estimated transfer error) with proportions $\lambda$ where $\lambda_m = (\beta_m \nStatesTask) / \sum_{m'} (\beta_{m'}N_{m'})$. At each iteration $k$, BTT returns the vector $\beta$ which optimizes the tradeoff between estimation and transfer errors, that is
\begin{align}\label{e:tradeoff}
\hbeta^k = \arg\min_{\beta\in [0,1]^\nTasks} \Big( \hTaskDist_\beta(\tQ^{k-1}) + \tau \sqrt{\frac{d}{\sum_{m=1}^\nTasks\beta_m \nStatesTask}} \Big),
\end{align}
where $\tau$ is a parameter. While the first term accounts for the transfer error induced by $\beta$, the second term is the estimation error due to the total amount of samples used by the algorithm.

Unlike AST and BAT, BTT is a heuristic algorithm motivated by the performance bound in Theorem~\ref{thm:all.transfer.rand.iter} and we do not provide any theoretical guarantee about its performance. The main technical difficulty w.r.t. the previous algorithms is that the setting considered here does not match the random task design assumption (see Def.~\ref{def:random.tasks}) since the number of source samples is constrained by $\nStatesTask$. As a result, given a proportion vector $\lambda$, we cannot assume samples to be drawn at random according to a multinomial of parameters $\lambda$. Without this assumption, it is an open question whether a similar bound to AST and BAT could be derived. Nonetheless, the experimental results reported in Section~\ref{s:experiments} show the effectiveness of BTT in solving the transfer tradeoff.



\section{Experiments}\label{s:experiments}

\begin{figure}[!t]
\begin{minipage}[t]{0.49\textwidth}
\captionof{table}{Parameters for the first set of tasks}
\label{T:tasks1}
\begin{scriptsize}
\begin{center}
\begin{tabular}{ccccc}
\toprule
tasks & $p$ & $l$ & $\eta$ & Reward\\
\otoprule
$\MDP_1$ & $0.9$ & $1$ & $0.1$ & $+1$ in $[-11,-9] \cup [9,11]$\\
\midrule
$\MDP_2$ & $0.9$ & $2$ & $0.1$ & $-5$ in $[-11,-9] \cup [9,11]$\\
$\MDP_3$ & $0.9$ & $1$ & $0.1$ & $+5$ in $[-11,-9] \cup [9,11]$\\
$\MDP_4$ & $0.9$ & $1$ & $0.1$ & $+1$ in $[-6,-4] \cup [4,6]$\\
$\MDP_5$ & $0.9$ & $1$ & $0.1$ & $-1$ in $[-6,-4] \cup [4,6]$\\
\bottomrule
\end{tabular}
\end{center}
\end{scriptsize} 

\end{minipage}
\hfill
\begin{minipage}[t]{0.49\textwidth}
\captionof{table}{Parameters for the second set of tasks}
\label{T:tasks2}
\begin{scriptsize}
\begin{center}
\begin{tabular}{ccccc}
\toprule
tasks & $p$ & $l$ & $\eta$ & Reward\\
\otoprule
$\MDP_1$ & $0.9$ & $1$ & $0.1$ & $+1$ in $[-11,-9] \cup [9,11]$\\
\midrule
$\MDP_6$ & $0.7$ & $1$ & $0.1$ & $+1$ in $[-11,-9] \cup [9,11]$\\
$\MDP_7$ & $0.1$ & $1$ & $0.1$ & $+1$ in $[-11,-9] \cup [9,11]$\\
$\MDP_8$ & $0.9$ & $1$ & $0.1$ & $-5$ in $[-11,-9] \cup [9,11]$\\
$\MDP_9$ & $0.7$ & $1$ & $0.5$ & $+5$ in $[-11,-9] \cup [9,11]$\\
\bottomrule
\end{tabular}
\end{center}
\end{scriptsize} 

\end{minipage}
\end{figure}

In this section, we report and discuss preliminary experimental results of the transfer algorithms introduced in the previous sections. The main objective is to illustrate the functioning of the algorithms and compare their results with the theoretical findings. Thus, we focus on a simple problem and we leave more challenging problems for future work.

We consider a continuous extension of the 50-state variant of the chain walk problem proposed in~\cite{lagoudakis2003least}. The state space is described by a continuous state variable $x$ and two actions are available: one that moves toward \emph{left} and the other toward \emph{right}. 
With probability $p$ each action makes a step of length $l$, affected by a noise $\eta$, in the intended direction, while with probability $1-p$ it moves in the opposite direction.
%
For the target task $\MDP_1$, the state--transition model is defined by the following parameters: $p=0.9$, $l=1$, and $\eta$ is uniform in the interval $[-0.1,0.1]$. The reward function provides $+1$ when the system state reaches the regions $[-11,-9]$ and $[9,11]$ and $0$ elsewhere.
Furthermore, to evaluate the performance of the transfer algorithms previously described, we considered eight source tasks $\{\MDP_2, \dots, \MDP_9\}$ whose state--transition model parameters and reward functions are reported in Tab.~\ref{T:tasks1} and~\ref{T:tasks2}.
To approximate the Q-functions, we use a linear combination of 20 radial basis functions. In particular, for each action, we consider $9$ Gaussians with means uniformly spread in the interval $[-20,20]$ and variance equal to $16$, plus a constant feature.
The number of iterations for the FQI algorithm has been empirically fixed to $13$.
Samples are collected through a sequence of episodes, each one starting from the state $x_0=0$ with actions chosen uniformly at random. For all the experiments, we average over $100$ runs and we report standard deviation error bars.

\begin{figure}[t]
\begin{center}
\includegraphics[width=0.495\textwidth]{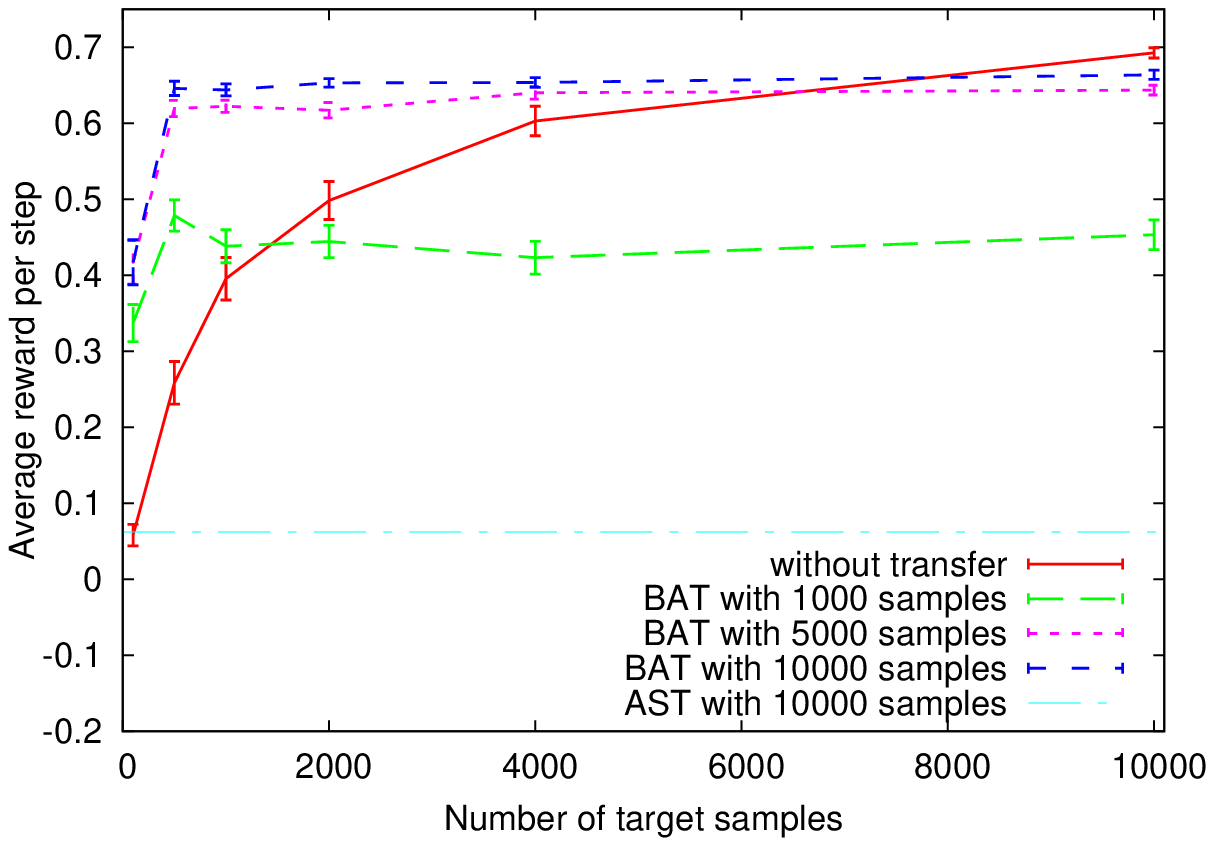}
\includegraphics[width=0.495\textwidth]{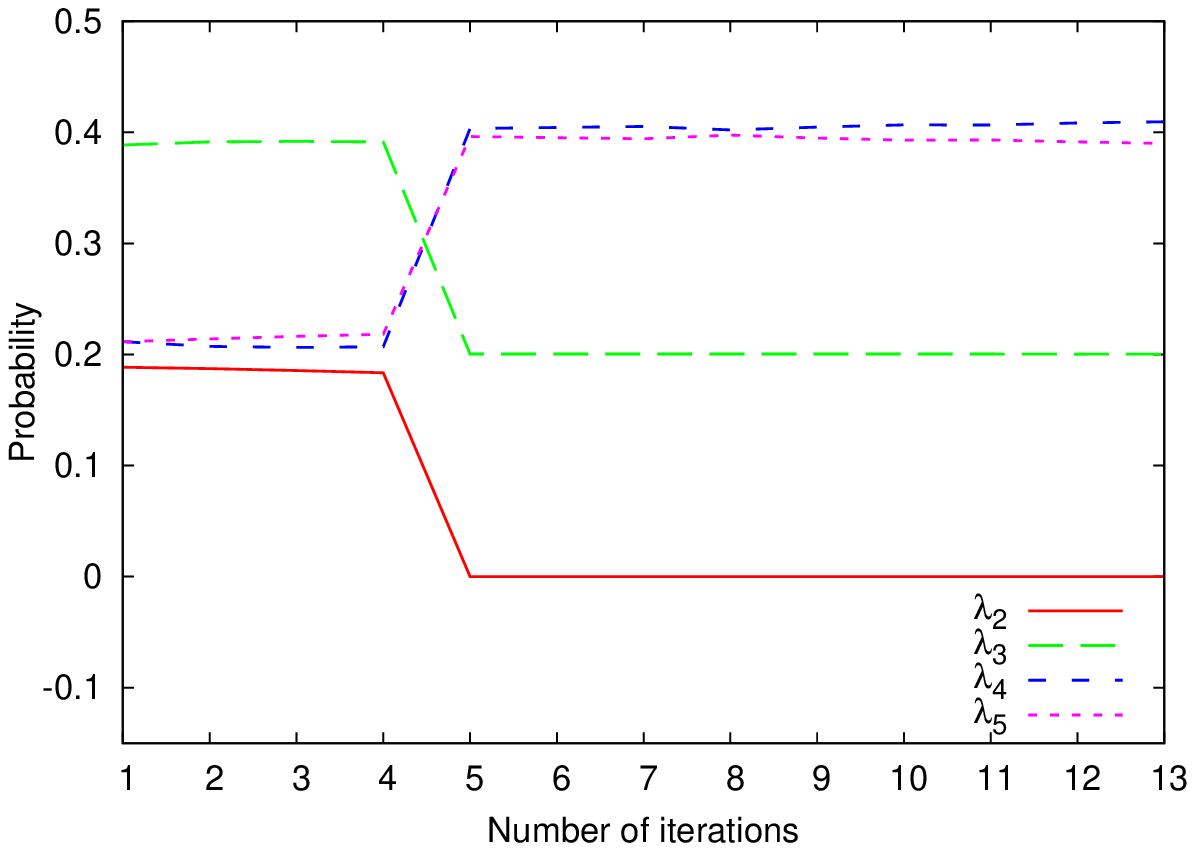}
\end{center}
\caption{Transfer from $\MDP_2$, $\MDP_3$, $\MDP_4$, $\MDP_5$. \emph{Left:} Comparison between single-task learning, AST with $L=10000$, BAT with $L=1000,5000,10000$. \emph{Right:} Source task probabilities estimated by BAT algorithm as a function of FQI iterations.}\label{F:transfer1}
\end{figure}

We first consider the \textit{pure} transfer problem where no target samples are actually used in the learning training set (i.e., $\lambda_1=0$). The objective is to study the impact of the transfer error due to the use of source samples and the effectiveness of BAT in finding a suitable combination of source tasks. 
The left plot in Fig.~\ref{F:transfer1} compares the performances of FQI with and without the transfer of samples from the first four tasks listed in Tab.~\ref{T:tasks1}. In case of single-task learning, the number of target samples refers to the samples used at learning time, while for BAT it represents the size $S$ of the auxiliary training set used to estimate the transfer error. Thus, while in single-task learning the performance increases with the target samples, in BAT they just make estimation of $\avgTaskDist_\lambda$ more accurate. The number of source samples added to the auxiliary set for each target sample was empirically fixed to one ($T=1$). We first run AST with $L=10000$ and $\lambda_2=\lambda_3=\lambda_4=\lambda_5=0.25$ (which on average corresponds to 2500 samples from each source). As it can be noticed by looking at the models in Tab.~\ref{T:tasks1}, this combination is very different from the target model and AST does not learn any good policy.
On the other hand, even with a small set of auxiliary target samples, BAT is able to learn good policies. Such result is due to the existence of linear combinations of source tasks which closely approximate the target task $\MDP_1$ at each iteration of FQI.
An example of the proportion coefficients computed at each iteration of BAT is shown in the right plot in Fig.~\ref{F:transfer1}. 
At the first iteration, FQI produces an approximation of the reward function. Given the first four source tasks, BAT finds a combination ($\lambda \simeq (0.2, 0.4, 0.2, 0.2)$) that produces the same reward function as $\reward_1$. However, after a few FQI iterations, such combination is no more able to accurately approximate functions $\T_1\tQ$. In fact, the state--transition model of task $\MDP_2$ is different from all the other ones (the step length is doubled). As a result, the coefficient $\lambda_2$ drops to zero, while a new combination among the other source tasks is found. Note that BAT significantly improves single-task learning, in particular when very few target samples are available.

In the general case, the target task cannot be obtained as any combination of the source tasks, as it happens by considering the second set of source tasks ($\MDP_6$, $\MDP_7$, $\MDP_8$, $\MDP_9$). The impact of such situation on the learning performance of BAT is shown in the left plot in Fig.~\ref{F:transfer2}. Note that, when a few target samples are available, the transfer of samples from a combination of the source tasks using the BAT algorithm is still beneficial. On the other hand, the performance attainable by BAT is bounded by the transfer error corresponding to the best source task combination (which in this case is large). As a result, single-task FQI quickly achieves a better performance. 

Results presented so far for the BAT transfer algorithm assume that FQI is trained only with the samples obtained through combinations of source tasks. Since a number of target samples is already available in the auxiliary training set, a trivial improvement is to include them in the training set together with the source samples (selected according to the proportions computed by BAT). As shown in the plot in the right side of Fig.~\ref{F:transfer2} this leads to a significant improvement. From the behavior of BAT it is clear that with a small set of target samples, it is better to transfer as many samples as possible from source tasks, while as the number of target samples increases, it is preferable to reduce the number of samples obtained from a combination of source tasks that actually does not match the target task. In fact, for $L=10000$, BAT has a much better performance at the beginning but it is then outperformed by single-task learning. On the other hand, for $L=1000$ the initial advantage is small but the performance remains close to single-task FQI for large number of target samples.
This experiment highlights the tradeoff between the need of samples to reduce the estimation error and the resulting transfer error when the target task cannot be expressed as a combination of source tasks (see Section~\ref{s:tradeoff}). BTT algorithm provides a principled way to address such tradeoff, and, as shown by the right plot in Fig.~\ref{F:transfer2}, it exploits the advantage of transferring source samples when a few target samples are available, and it reduces the weight of the source tasks (so as to avoid large transfer errors) when samples from the target task are enough.
It is interesting to notice that increasing the number of samples available for each source task from $5000$ to $10000$ improves the performance in the first part of the graph, while keeping unchanged the final performance. This is due to the capability of the BTT algorithm to avoid the transfer of source samples when there is no need for them, thus avoiding \emph{negative transfer} effects.


\begin{figure}[t]
\begin{center}
\includegraphics[width=0.495\textwidth]{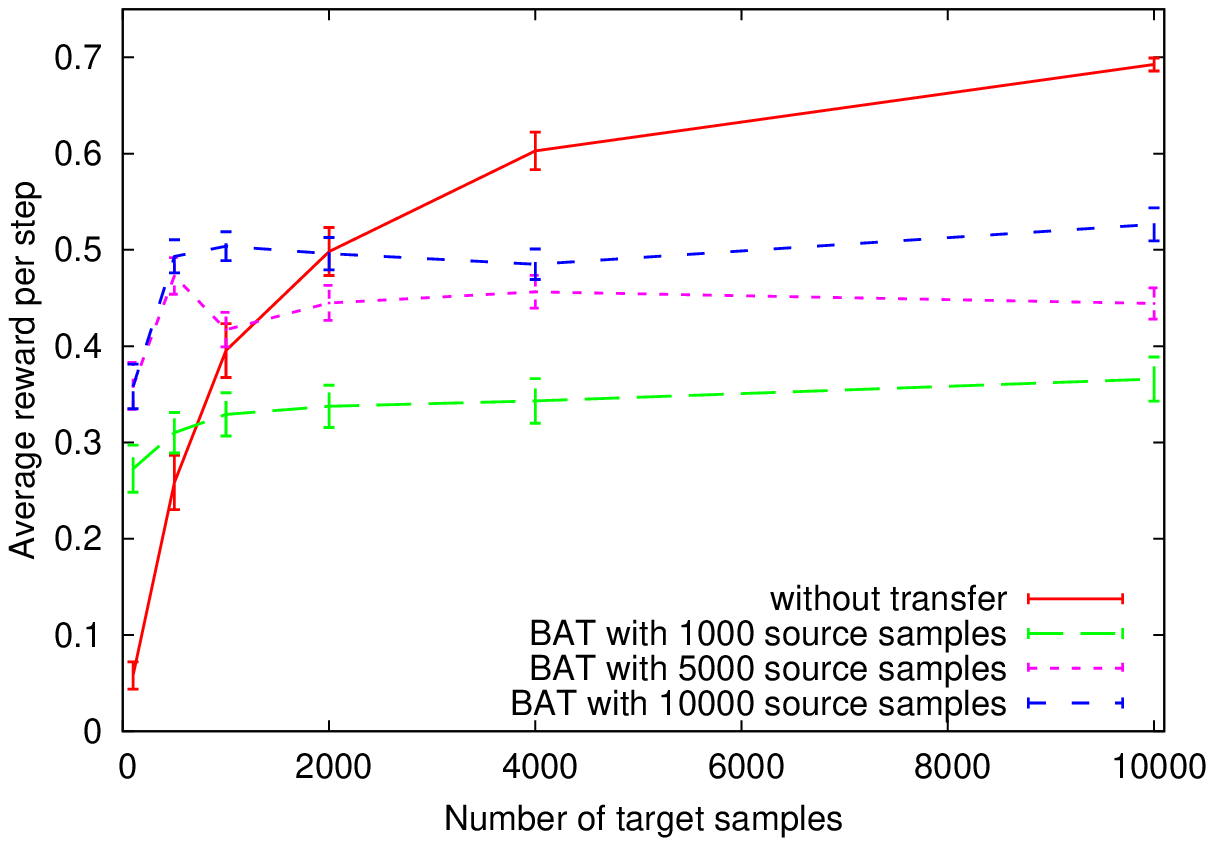}
\includegraphics[width=0.495\textwidth]{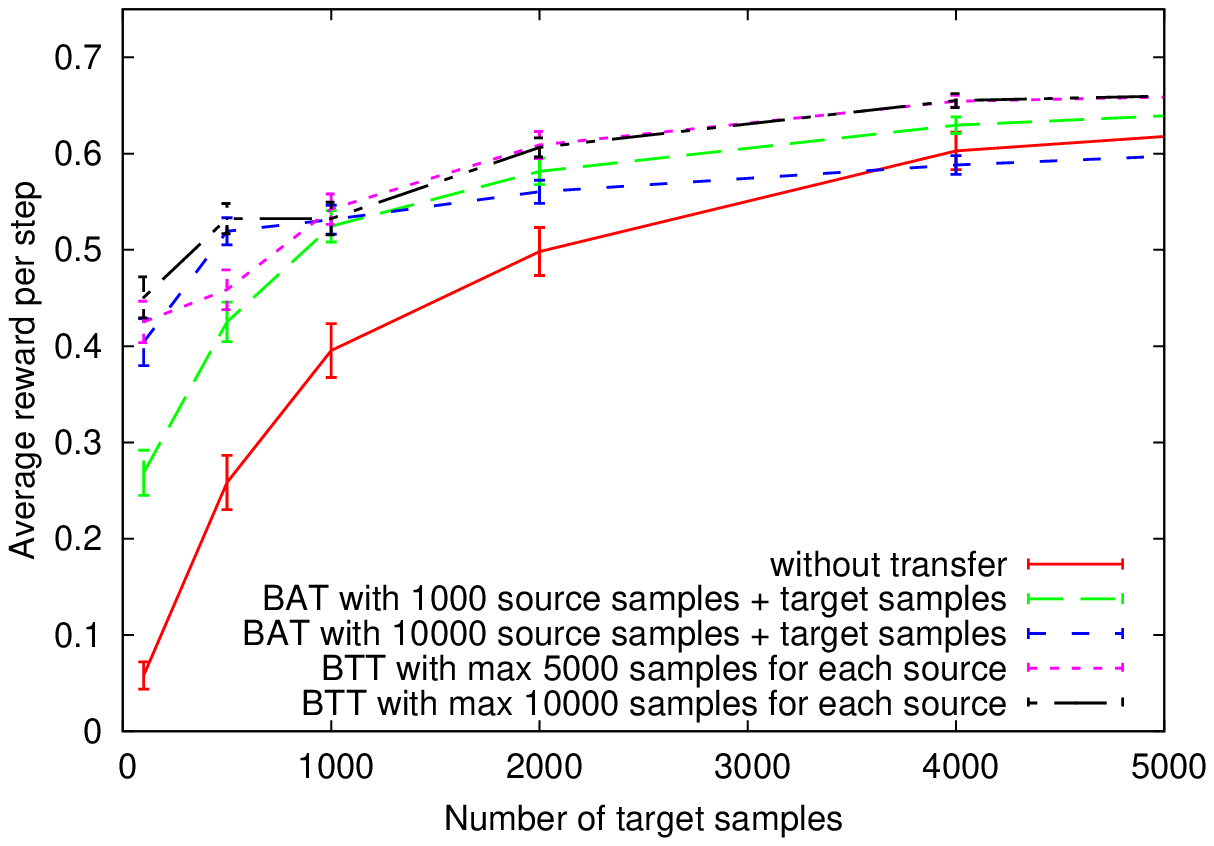}
\end{center}
\caption{Transfer from $\MDP_6$, $\MDP_7$, $\MDP_8$, $\MDP_9$. \emph{Left:} Comparison between single-task learning and BAT with $L=1000,5000,10000$. \emph{Right:} Comparison between single-task learning, BAT with $L=1000,10000$ in addition to the target samples, and BTT ($\tau = 0.75$) with $5000$ and $10000$ samples for each source task. To improve readability, the plot is truncated at 5000 target samples.}\label{F:transfer2}
\end{figure}


\section{Conclusions}\label{s:conclusions}

In this paper, we formalized and studied the sample-transfer problem. We first derived a finite-sample analysis of the performance of a simple transfer algorithm which includes all the source samples into the training set used to solve a given target task. At the best of our knowledge, this is the first theoretical result for a transfer algorithm in RL showing the potential benefit of transfer over single-task learning. Then, in the case when the designer has direct access to the source tasks, we introduced an adaptive algorithm which selects the proportion of source tasks so as to minimize the bias due to the use of source samples. Finally, we considered a more challenging setting where the number of samples available in each source task is limited and a tradeoff between the amount of transferred samples and the similarity between source and target tasks must be solved. For this setting, we proposed a principled adaptive algorithm. Finally, we report a detailed experimental analysis on a simple problem which confirms and supports the theoretical findings.

This work opens several directions for future work.
\begin{itemize}
\item \textit{Transfer with transformations.} In many problems, there exist simple transformations to the source tasks dynamics and reward which would increase their similarity w.r.t. the target task, thus making the transfer process more effective. How affine transformations could be used in the adaptive transfer algorithms presented in this paper is an interesting direction for future work. In particular, it is an open question whether the cost (in terms of samples) of finding a suitable transformation would be effectively counter-balanced by transferring more similar samples.
\item \textit{Transfer between tasks with different state-action spaces.} In many real applications source and target tasks might have a different number of state variables and different actions. Thus, the current work should be extended to the more general case of tasks with different state-action spaces and it should be integrated with inter-task mapping transfer methods (see \cite{taylor2009transfer}).
\item \textit{Transfer with fixed tasks design.} Definition~\ref{def:random.tasks} prescribes the process used to generate the training set used in learning the target task. At each state-action pair, the sample is generated from a source task chosen at random according to a multinomial distribution. When the designer has no access to the source tasks and their samples are generated beforehand, this generative model is not reasonable. A different model (\textit{fixed} tasks design) should be defined where each sample is coming from a specific source which is fixed in advance. An interesting direction for future work is to understand how this different generative model affects the performance of the transfer algorithm and whether it is possible to define effective adaptive algorithms for this case.
\end{itemize}

{\bf Acknowledgments} This work was supported by French National Research Agency through the projects EXPLO-RA $n^\circ$ ANR-08-COSI-004, by Ministry of Higher Education and Research, Nord-Pas de Calais Regional Council and FEDER through the ``contrat de projets {\'e}tat region 2007--2013", and by PASCAL2 European Network of Excellence. The research leading to these results has also received funding from the European Community's Seventh Framework Programme (FP7/2007-2013) under grant agreement n¡ 231495.


\newpage
\bibliographystyle{plain}
\bibliography{transfer}

\newpage
\appendix


\section{Additional Notation}\label{app:add.notation}

Besides the notation introduced in Section~\ref{s:preliminaries}, here we introduce additional symbols used in the proofs. We define two empirical norms on functions and vectors. Given a set of $\nStates$ state-action pairs $\{(X_n,A_n)\}_{n=1}^\nStates$ drawn i.i.d. from $\mu$ we define the empirical norm $||f||_{\hmu}$ as
\begin{equation*}\label{L2-emp-norm}
||f||^2_{\hmu}=\avgStates f(X_n,A_n)^2.
\end{equation*}
Similarly, given a vector $y\in \Re^{N}$ we define the empirical norm $||y||_{N}$ as
\begin{equation*}\label{L2-emp-norm-vect}
||y||^2_{N}=\avgStates y_{n}^2.
\end{equation*}
Given a set of $\nStates$ state-action pairs $\{(X_n,A_n)\}_{n=1}^\nStates$, let $\Phi=[\phi(X_1,A_1)^\top;\ldots;\phi(X_\nStates,A_\nStates)^\top]$ be the feature matrix defined at the states $\{(X_n,A_n)\}_{n=1}^\nStates$, and $\F_n=\{\Phi\alpha,\;\alpha\in\Re^d\}\subset\Re^\nStates$ be the corresponding vector space. We denote by $\hPi:\Re^\nStates\rightarrow\F_\nStates$ the empirical orthogonal projection onto $\F_\nStates$, defined by 
\begin{align}\label{e:emp.projection}
\hPi y=\argmin_{z\in\F_\nStates}||y-z||_\nStates.
\end{align}
Note that the orthogonal projection $\hPi y$ of any $y\in\Re^\nStates$ always exists and is unique. 


\section{Fitted Q-iteration with Linear Spaces}\label{app.fqi.linear}

\begin{figure}[t]
\bookbox{
\begin{algorithmic}
\STATE \textbf{Input:} Linear space $\F = \text{span}\{\varphi_i, 1\leq i\leq d\}$, initial function $\tQ^{0}$
\STATE
\FOR{$k = 1,2,\ldots$}
\STATE Draw training samples $\{(X_n,A_n,Y_n,R_n)\}_{n=1}^\nStates$
\STATE Build the feature matrix $\Phi=[\phi(X_1,A_{1})^\top;\ldots;\phi(X_n,A_{n})^\top]$
\STATE Compute the vector $p_{n} = R_n + \gamma\max_{a'\in\action}\tQ^{k-1}(Y_n,a')$
\STATE Compute the projection $\hat \alpha^{k} = (\Phi^\top \Phi)^{-1}\Phi^\top p$
\STATE Return the truncated function $\tQ^{k} = \trunc(f_{\hat \alpha_{k}})$
\ENDFOR
\end{algorithmic}}
\caption{A pseudo-code for Fitted Q-iteration.}\label{f:fqi.algorithm}
\end{figure}

Although fitted iterative methods have been already analyzed in detail in~\cite{munos2008finite} and \cite{antos2007fitted}, at the best of our knowledge no explicit finite-sample bounds for FQI with linear spaces is available. Since at each iteration, FQI solves an explicit regression problem, the derivation is mostly a straightforward application of regression bounds for linear spaces and quadratic loss. Here we just report the result and the proof of the single iteration error for the so-called fixed and random samples design settings.

In Algorithm~\ref{f:fqi.algorithm} we report the structure of the algorithm.


\subsection{Fixed Samples Design}\label{app:fqi.fixed.design}

 \begin{figure}[ht!]
 \centering
 \includegraphics[width=0.4\columnwidth]{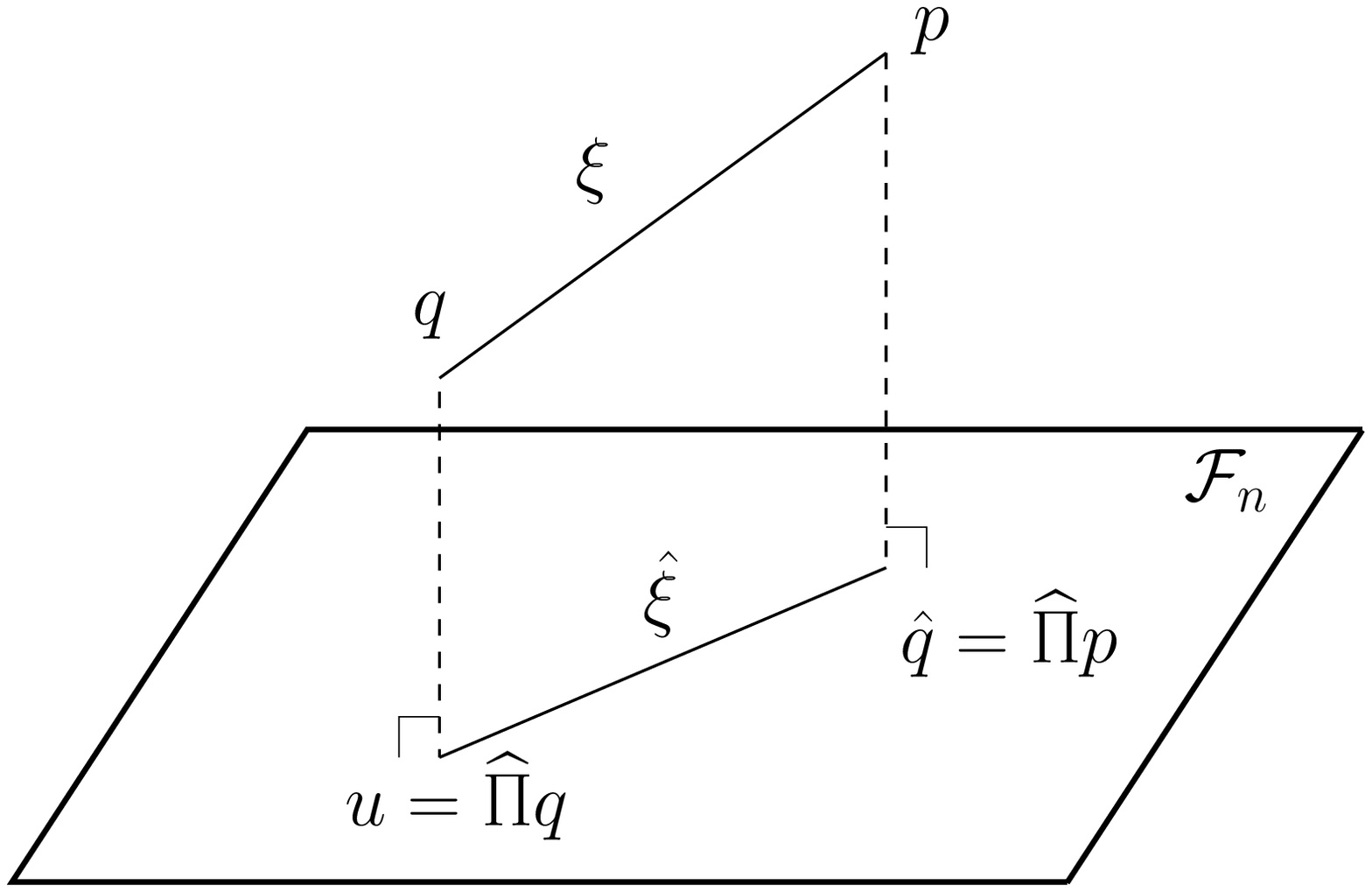}
 \vspace{-0.1in}
 \caption{This figure shows that the vectors used in the proof of Theorem~\ref{thm:fqi.emp.bound}.}
 \label{f:projection}
 \end{figure}

Similar to the analysis of LSTD in~\cite{Lazaric10FS} we first derive the fixed design bound (i.e., the performance is evaluated exactly on the states in the training set).

\begin{theorem}\label{thm:fqi.emp.bound}
Let $\F=\{\phi(\cdot,\cdot)^\top\alpha, \alpha\in\Re^d\}$ be a $d$-dimensional linear space. Let $\{(x_n,a_n,Y_n,R_n)\}_{n=1}^\nStates$ be the training set where $\{(x_n,a_n)\}_{n=1}^\nStates$ is an arbitrary sequence of state-action pairs, $Y_n \sim \dynamics(\cdot|x_n,a_n)$, and $R_n = \reward(x_n,a_n)$. Given a function $Q\in\mathcal B(\XA,\Vmax)$, let $q\in\Re^\nStates$ be the vector whose components are $q_n=(\T Q)(x_n,a_n)$ and $\hat q$ be the solution of a single iteration of fitted value iteration. Then with probability $1-\delta$ (w.r.t. the random next states $Y_n$), $\hat q$ satisfies
\begin{align}\label{eq:fqi.emp.bound}
||\hat q - q||_\nStates \leq ||\hPi q - q||_\nStates + 4\Vmax \sqrt{\frac{2}{\nStates}\log \left(\frac{3(9\nStates e^2)^{d+1}}{\delta}\right)}.
\end{align}
\end{theorem}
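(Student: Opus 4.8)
The plan is to reduce the statement to a standard least-squares regression bound with a deterministic (fixed) design, and then control the stochasticity coming only from the random next states $Y_n$. First I would set up the relevant vectors in $\Re^\nStates$: the target vector $q$ with $q_n = (\T Q)(x_n,a_n)$, the ``noisy'' target $p$ with $p_n = R_n + \gamma\max_{a'}Q(Y_n,a')$, and note that by construction $\expectShort[p_n] = q_n$ since $R_n = \reward(x_n,a_n)$ is deterministic and $\expectShort[\max_{a'}Q(Y_n,a')] = \int \max_{a'}Q(y,a')\dynamics(dy|x_n,a_n)$. Write $\xi_n = p_n - q_n$; these are independent (across $n$), zero-mean, and bounded by $2\Vmax$ in absolute value (since $|R_n + \gamma\max_{a'}Q(Y_n,a')| \le \Rmax + \gamma\Vmax = \Vmax$ and $|q_n|\le\Vmax$, so $|\xi_n|\le 2\Vmax$). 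The estimator is $\hat q = \hPi p$, the empirical orthogonal projection of $p$ onto the column space $\F_\nStates$ of $\Phi$.

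The geometric core is a Pythagorean decomposition, exactly as sketched in Figure~\ref{f:projection}. Since $\hat q = \hPi p$ and $\hPi q \in \F_\nStates$, and $\hat q - \hPi q \in \F_\nStates$, I would write
\begin{align*}
||\hat q - q||_\nStates^2 &= ||\hat q - \hPi q||_\nStates^2 + ||\hPi q - q||_\nStates^2,
\end{align*}
because $\hat q - \hPi q \in \F_\nStates$ is orthogonal (w.r.t. $\langle\cdot,\cdot\rangle_\nStates$) to $\hPi q - q = (\hPi - I)q$. Then I must bound $||\hat q - \hPi q||_\nStates = ||\hPi p - \hPi q||_\nStates = ||\hPi\xi||_\nStates$, i.e. the norm of the projection of the noise vector $\xi$ onto the $d$-dimensional subspace $\F_\nStates$. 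Taking square roots and using $\sqrt{a+b}\le\sqrt a + \sqrt b$ on the Pythagorean identity gives $||\hat q - q||_\nStates \le ||\hPi q - q||_\nStates + ||\hPi\xi||_\nStates$, which is the shape of the claimed bound.

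The remaining and main technical step is a high-probability bound $||\hPi\xi||_\nStates \le 4\Vmax\sqrt{\frac{2}{\nStates}\log\big(\frac{3(9\nStates e^2)^{d+1}}{\delta}\big)}$. The obstacle is that $\hPi\xi$ depends on $\xi$ in a data-dependent way, so one cannot simply apply a scalar concentration inequality; instead I would use the fact that $||\hPi\xi||_\nStates = \sup_{f\in\F_\nStates,\,||f||_\nStates\le 1}\langle \xi, f\rangle_\nStates$ and apply a covering-number argument over the unit ball of the $d$-dimensional space $\F_\nStates$ in the empirical norm. Concretely: discretize this unit ball to resolution $\varepsilon$ with a cover of size at most $(3/\varepsilon)^d$ (standard volumetric bound for a $d$-dimensional ball), apply Hoeffding's inequality to the bounded independent variables $\xi_n f(x_n,a_n)$ for each fixed $f$ in the cover together with a union bound, and then pay an extra additive $2\Vmax\varepsilon$-type term to pass from the net to the supremum; optimizing $\varepsilon$ (taking it of order $1/\nStates$) yields the stated rate with the $(9\nStates e^2)^{d+1}$ factor and constant $4\Vmax$. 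This is precisely the linear-space, quadratic-loss regression bound alluded to in the text as ``a straightforward application of regression bounds for linear spaces''; the care is only in tracking the constants and the $d+1$ (rather than $d$) exponent, which comes from the extra degree of freedom in also covering the scale of $\xi$'s contribution. I expect the covering/union-bound bookkeeping — getting exactly the constants in Eq.~\ref{eq:fqi.emp.bound} — to be the fiddly part, while the conceptual content is entirely in the Pythagorean splitting above.
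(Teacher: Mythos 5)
Your proposal is correct and follows essentially the same route as the paper: the same Pythagorean split $\|\hat q - q\|_\nStates^2 = \|\hat q - \hPi q\|_\nStates^2 + \|\hPi q - q\|_\nStates^2$ with zero-mean noise $\xi_n = p_n - q_n$ bounded by $2\Vmax$, followed by a uniform concentration bound for the projected noise over the $d$-dimensional linear class. The paper phrases this last step in the self-normalized form $\|\hPi\xi\|_\nStates^2 = \langle \xi, \hPi\xi\rangle \leq 4\Vmax \|\hPi\xi\|_\nStates \sqrt{\tfrac{2}{\nStates}\log(\cdot)}$ via a variation of Pollard's inequality, which is the same covering-number argument you describe through the supremum over the empirical unit ball of $\F_\nStates$.
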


\begin{proof}
We denote by $u\in\Re^\nStates$ the orthogonal projection of the target vector $q$ onto the vector space $\F_\nStates$, that is $u = \hPi q$. By the definition of orthogonal projection and the Pythagorean theorem we decompose the error $||\hat q - q||_\nStates$ as
\begin{align}
||\hat q - q||_\nStates^2 = ||\hat q - u||_\nStates^2 + ||u - q||_\nStates^2,
\end{align}
where the first term represents the estimation error and the second term is the approximation error (see Fig.~\ref{f:projection}). We denote by $\noise_n = p_n - q_n$ the noise in the observations $p$ w.r.t. $q$. It is easy to notice that
\begin{align}
\expectA{\noise_n} = \expectB{Y\sim\dynamics(\cdot|x_n,a_n)}{\reward(x_n,a_n,Y) + \gamma\max_{a'\in\action} Q(Y, a')} - (\T Q)(x_n,a_n) = 0,
\end{align}
and that $|\noise_n| \leq 2\Vmax$. We also define the projected noise $\hnoise_n = \hat q_n - u_n$, that is $\hnoise = \hPi \noise$. Thus, we can rewrite the estimation error as
\begin{align}\label{eq:est-error1}
||\hat q - u||_\nStates^2 = ||\hnoise||_\nStates^2 = \langle \hnoise, \hnoise \rangle = \langle \noise, \hnoise \rangle,
\end{align}
where the last equality follows from the fact that $\hnoise$ is the orthogonal projection of $\noise$. Since $\hnoise \in \F_\nStates$, let $f_\beta\in\F$ be any function such that $f_\beta(x_n,a_n) = \hnoise_n$, and by a straightforward application of a variation of Pollard's inequality~\cite{GyKoKrWa02} we obtain
\begin{align}
\langle \noise, \hnoise \rangle &= \avgStates \noise_n f_\beta(x_n,a_n) \leq 4\Vmax \left(\avgStates f_\beta(x_n,a_n)^2\right)^{1/2} \sqrt{\frac{2}{\nStates}\log \left(\frac{3(9\nStates e^2)^{d+1}}{\delta}\right)} \nonumber\\
&= 4\Vmax ||\hnoise||_\nStates \sqrt{\frac{2}{\nStates}\log \left(\frac{3(9\nStates e^2)^{d+1}}{\delta}\right)}
\end{align}
with probability $1-\delta$. Thus from equation~\ref{eq:est-error1} we bound the estimation error by
\begin{align}
||\hat q - u||_\nStates \leq 4\Vmax \sqrt{\frac{2}{\nStates}\log \left(\frac{3(9\nStates e^2)^{d+1}}{\delta}\right)}.
\end{align}
Putting together the estimation error bound and the approximation error term, the statement of the theorem follows.
\end{proof}


\subsection{Random Samples Design}\label{app:fqi.random.design}

While in the previous section we analyzed the performance of FQI on the very same state-action pairs in the training set, we now focus on the generalization (i.e., prediction) performance on the whole state-action space.

Let $\hat Q$ be any function $f_\halpha\in\F$ satisfying $\Phi\halpha=\hat q$, where $\hat q$ is the vector defined in the previous section. Then we derive the following theorem. 

\begin{theorem}\label{thm:fqi.gen.bound}
Let $\F=\{\phi(\cdot,\cdot)^\top\alpha, \alpha\in\Re^d\}$ be a $d$-dimensional linear space. Let $\{(X_n,A_n,Y_n,R_n)\}_{n=1}^\nStates$ be the training set where $(X_n,A_n)\iid\mu$, $Y_n \sim \dynamics(\cdot|X_n,A_n)$, and $R_n = \reward(X_n,A_n)$. Given a function $Q\in\mathcal B(\XA,\Vmax)$, let $\hQ$ be the solution of a single iteration of fitted value iteration. Then with probability $1-\delta$ (w.r.t. the samples and the next states), $\hQ$ satisfies
\begin{align}\label{eq:fqi.gen.bound}
||T(\hQ) - \T Q||_\mu &\leq 4 \inf_{f_{\alpha}\in\F} ||f_{\alpha} - \T Q||_{\mu} \nonumber\\
&+ 24(\Vmax + L||\alpha_*||)\sqrt{\frac{2}{\nStates}\log\frac{9}{\delta}}\nonumber\\
&+ 32\Vmax \sqrt{\frac{2}{\nStates}\log \left(\frac{27(12\nStates e^2)^{2(d+1)}}{\delta}\right)}.
\end{align}
\end{theorem}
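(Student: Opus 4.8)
The plan is to reduce the generalization statement to the fixed-design bound of Theorem~\ref{thm:fqi.emp.bound} and then to pay for two ``empirical-to-population'' passages with covering-number arguments over the truncated linear class. First, condition on the sample points $\{(X_n,A_n)\}_{n=1}^\nStates$, so that $\{(X_n,A_n,Y_n,R_n)\}$ becomes a fixed-design training set in the sense of Theorem~\ref{thm:fqi.emp.bound} with $q_n=(\T Q)(X_n,A_n)$: with probability $1-\delta$ over the random next states,
\begin{align}
||\hat q - q||_\nStates \leq ||\hPi q - q||_\nStates + 4\Vmax\sqrt{\frac{2}{\nStates}\log\Big(\tfrac{3(9\nStates e^2)^{d+1}}{\delta}\Big)}.
\end{align}
Since $\hPi q$ is the empirical least-squares fit, $||\hPi q - q||_\nStates = \min_\alpha ||f_\alpha - \T Q||_{\hmu} \leq ||f_{\alpha_*} - \T Q||_{\hmu}$, where $\alpha_*$ attains $\inf_{f_\alpha\in\F}||f_\alpha - \T Q||_\mu$; and because $\T Q$ takes values in $[-\Vmax,\Vmax]$ and truncation is non-expansive, $||\trunc(\hQ) - \T Q||_{\hmu} \leq ||\hat q - q||_\nStates$. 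This leaves two quantities still measured on the empirical distribution, one involving the data-dependent function $\trunc(\hQ)$ and one involving the fixed function $f_{\alpha_*}$.

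Next I would transfer $||\trunc(\hQ) - \T Q||_{\hmu}$ to its population counterpart uniformly over the class $\trunc(\F)=\{\trunc(f_\alpha):\alpha\in\Re^d\}$. Its members are bounded by $\Vmax$ and its pseudo-dimension is at most $d+1$, so the induced class of squared residuals $\{(\trunc(f_\alpha)-\T Q)^2\}$ has a controlled covering number; a Pollard-type multiplicative (Bernstein-type) deviation inequality then gives, with probability $1-\delta$ and simultaneously for all $\alpha$,
\begin{align}
||\trunc(f_\alpha) - \T Q||_\mu \leq \sqrt{2}\,||\trunc(f_\alpha) - \T Q||_{\hmu} + c\,\Vmax\sqrt{\frac{2}{\nStates}\log\Big(\tfrac{(12\nStates e^2)^{2(d+1)}}{\delta}\Big)},
\end{align}
the doubled exponent $2(d+1)$ arising precisely from covering a squared (product) class. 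For the remaining fixed function $f_{\alpha_*}$ no covering number is needed: a Bernstein bound on the i.i.d.\ squared residuals (whose variance is controlled by their mean) gives $||f_{\alpha_*}-\T Q||_{\hmu} \leq \sqrt{2}\,||f_{\alpha_*}-\T Q||_\mu + c'(\Vmax+||f_{\alpha_*}||_\infty)\sqrt{\tfrac{2}{\nStates}\log\tfrac{1}{\delta}}$, and Cauchy--Schwarz with $|\varphi_i|\leq\noiseBound$ bounds $||f_{\alpha_*}||_\infty$ by a multiple of $\noiseBound\,||\alpha_*||$, which is exactly why that deviation term carries the factor $(\Vmax+\noiseBound||\alpha_*||)$ and why its logarithmic factor is free of $d$.

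Finally, chaining the three conversions — apply the uniform bound at $\alpha=\hat\alpha$, insert non-expansiveness together with Theorem~\ref{thm:fqi.emp.bound}, then the fixed-function Bernstein bound — and collecting the deviation terms yields the claimed inequality, with the accumulated constant factors from the successive multiplicative norm comparisons producing the coefficient $4$ in front of $\inf_{f_\alpha}||f_\alpha-\T Q||_\mu$; a union bound over the $O(1)$ high-probability events (with the rescaling of $\delta$ responsible for the $9$ and $27$ inside the logarithms) completes the proof. I expect the middle step to be the main obstacle: obtaining the fast $O(d/\nStates)$ estimation rate rather than the slow $O(\sqrt{d/\nStates})$ rate requires the multiplicative form of Pollard's inequality applied to the squared class, and one must carefully separate the treatment of the uniformly bounded $\trunc(\hQ)$ (covering number, hence the $d$ in the log) from that of the possibly unbounded $f_{\alpha_*}$ (a single function, hence $||\alpha_*||$ in the prefactor).
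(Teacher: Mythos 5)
Your proposal follows essentially the same route as the paper's proof: reduce to the fixed-design bound of Theorem~\ref{thm:fqi.emp.bound}, exploit the optimality of the empirical least-squares fit to compare against $f_{\alpha_*}$, and then make two empirical-to-population conversions --- a uniform (covering-number) one over the truncated class for the data-dependent $\trunc(\hQ)$, yielding the $2(d+1)$ exponent, and a single-function concentration bound for $f_{\alpha_*}$ with sup-norm $\Vmax + C\norm{\alpha_*}$, yielding the $d$-free term --- before a union bound rescales $\delta$. The paper implements exactly these steps (citing variants of Theorem~11.2 of Gy\"orfi et al.\ where you invoke Bernstein/Pollard-type inequalities), so your argument is correct and not materially different.
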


\begin{proof}
The proof mainly relies on the application of concentration of measures inequalities for linear spaces to the deterministic design bound in Theorem~\ref{thm:fqi.emp.bound}.

Let $f_{\halpha_*}\in\F$ be any function such that $f_{\halpha_*}(X_n) = (\hPi q)_n$, thus the approximation error $||\hPi q - q||_\nStates$ can be rewritten as $||f_{\halpha_*} - \T Q||_{\hmu}$. Furthermore we denote by $f_{\alpha_*} = \Pi(\T Q)$, that is the best approximation of the target function $\T Q$ onto $\F$ w.r.t. the distribution $\mu$. Since $f_{\halpha_*}$ is the minimizer of the empirical squared error, any function in $\F$ different from $f_{\halpha_*}$ has a bigger empirical loss, thus we obtain
\begin{align}\label{eq:approx.error}
||f_{\halpha_*} - \T Q||_{\hmu} \leq ||f_{\alpha_*} - \T Q||_{\hmu} \leq 2 ||f_{\alpha_*} - \T Q||_{\mu} + 12(\Vmax + L||\alpha_*||)\sqrt{\frac{2}{\nStates}\log\frac{3}{\delta'}},
\end{align}
with probability $1-\delta'$, where the second inequality is an application of a variation of Theorem~11.2 in~\cite{GyKoKrWa02} with a bound $||f_{\alpha_*} - \T Q||_{\infty} \leq \Vmax + L||\alpha_*||$. 
Similar, we notice that the left hand side of Eq.~\ref{eq:fqi.emp.bound} is $||\hat q - q||_{N} = ||\hQ - \T^{*}Q||_{\hmu}$ and we obtain
\begin{align}\label{eq:left.bound}
2||\hQ - \T Q||_{\hmu} \geq 2||T(\hQ) - \T Q||_{\hmu} \geq ||T(\hQ) - \T Q||_{\mu} - 24 \Vmax\sqrt{\frac{2}{\nStates}\log\left(\frac{9(12e\nStates)^{2(d+1)}}{\delta'}\right)}
\end{align}
with probability $1-\delta'$, where the second inequality is an application of a variation of Theorem~11.2 in~\cite{GyKoKrWa02}. Putting together Eqs~\ref{eq:fqi.emp.bound}, \ref{eq:approx.error}, and \ref{eq:left.bound} we obtain
\begin{align*}
||T(\hQ) - \T Q||_{\mu} \leq &2\Bigg(2 ||f_{\alpha_*} - \T Q||_{\mu} + 12(\Vmax + L||\alpha_*||)\sqrt{\frac{2}{\nStates}\log\frac{3}{\delta'}}+ \\
&+ 4\Vmax \sqrt{\frac{2}{\nStates}\log \left(\frac{3(9\nStates e^2)^{d+1}}{\delta'}\right)} \Bigg) 
+ 24 \Vmax\sqrt{\frac{2}{\nStates}\log\left(\frac{9(12e\nStates)^{2(d+1)}}{\delta'}\right)}
\end{align*}
Finally, by setting $\delta = 3\delta'$ the statement follows.
\end{proof}


\section{Analysis of AST}\label{app:analysis.ast}


\subsection{Proof of Theorem~\ref{thm:all.transfer.rand.iter}}\label{app:proof.ast.iter}

\begin{proof}

Since the proof follows similar steps as in the proof of Theorem~\ref{thm:fqi.gen.bound}, we discuss here only the fixed samples design bound. We define the vector $p\in\Re^{\nTot}$ such that for any $l=1,\ldots,\nTot$, $p_{l} = \sumTasks \ind{M_{l}=m} (R_l^{m} + \gamma\max_{a'} Q(Y_l^{m},a'))$. The target vector $q\in\Re^{\nTot}$ is the image of the function $Q$ through the average optimal Bellman operator. In fact, by defining $q_{l} = (\bT_\lambda Q)(X_l,A_l)$ we obtain a zero-mean noise vector $\noise_{l} = p_{l} - q_{l}$ such that $\expectA{\noise_{l}} = 0$ and $|\noise_{l}| \leq 2\Vmax$.~\footnote{The expectation is taken w.r.t. both the random realization of the reward $R_{l}^{m}$ and next state $Y_{l}^{m}$ and task index $M_{l}$.} 

The statement of the theorem simply follows by decomposing the prediction error of $\hQall$ as
\begin{align}
\norm{T(\hQall) - \T_1 Q}_{\mu} \leq \norm{T(\hQall) - \bT_\lambda Q}_{\mu} + \norm{\bT_\lambda Q - \T_1 Q}_{\mu}.
\end{align}
By substituting $\norm{T(\hQall) - \bT_\lambda Q}_{\mu}$ with a FQI bound w.r.t. the target function $\bT_\lambda Q$ we obtain
\begin{align}
\norm{T(\hQall) - \T_1 Q}_{\mu} &\leq  4 ||f_{\alpha} - \bT_\lambda Q||_{\mu} + \norm{\bT_\lambda Q - \T_1 Q}_{\mu}\\
&+ 24(\Vmax + C||\alpha||)\sqrt{\frac{2}{\nTot}\log\frac{9}{\delta}}\nonumber\\
&+ 32\Vmax \sqrt{\frac{2}{\nTot}\log \left(\frac{27(12\nTot e^2)^{2(d+1)}}{\delta}\right)}.
\end{align}
By rewriting the approximation error as $||f_{\alpha} - \bT_\lambda Q||_{\mu} \leq ||f_{\alpha} - \T^{1} Q||_{\mu} + ||\T^{1}Q - \bT_\lambda Q||_{\mu}$ and using $\alpha = \alpha_{*}$ the final bound follows.

\end{proof}


\subsection{Proof of Theorem~\ref{thm:ast.random.bound.prop}}\label{app:proof.ast.prop}

\begin{proof} \textit{[Sketch]}
The main structure of the proof is exactly the same as in~\cite{munos2008finite}. The main differences are due to the use of linear spaces and the transfer error. Following the passages in the proof of Theorem 2 in~\cite{munos2008finite}, we obtain
\begin{align*}
||Q^* - Q^{\pi_{K}}||_{\nu} \leq \frac{2\gamma}{(1-\gamma)^{3/2}} \Bigg[ \sqrt{C_{\mu,\nu}}\max_{k} ||T(\hQ^k) &- \T^{1} \tQ^k||_\mu + 2\Vmax\gamma^K \Bigg].
\end{align*}
Thus, we need to study all the terms in the statement of Theorem~\ref{thm:all.transfer.rand.iter} affected by the maximization over the iterations. 

\textit{Approximation error.} The approximation term becomes
\begin{align*}
\max_{k} \min_{f\in\F}||f - \T^{1} \tQ^k||_{\mu} \leq \sup_{g\in\F} \min_{f\in\F}||f - \T^{1} g||_{\mu}.
\end{align*}
This term is referred to as the inherent Bellman error of the space $\F$ and it is related to how well the Bellman images of functions in $\F$ can be approximated by $\F$ itself. 

\textit{Estimation error.} The second relevant term is the term $||\alpha^k_*||$ appearing in the estimation error. We recall that $f_{\alpha_*^k} = \Pi \T_1 \tQ^{k-1}$ is the projection on $\F$ of the Bellman image of the function returned at the previous iteration. The function $\tQ^{k-1}$ is truncated in the interval $[-\Vmax,\Vmax]$ and its Bellman image $\T_1 \tQ^{k-1}$ is still bounded in the same interval. Since the projection operator $\Pi$ is a non-expansion, we finally have that $||f_{\alpha_*^k}||_{\infty} \leq \Vmax$. Using Assumption~\ref{a:linear.indep}, for any $f_\alpha\in\F$, it is possible to relate the norm of the function to the norm of the vector $\alpha$ as
\begin{align*}
||f_{\alpha}||_{\mu}^2 = ||\phi^\top\alpha||_{\mu}^2 = \alpha^\top G \alpha \geq \omega \alpha^\top \alpha = \omega||\alpha||^2.
\end{align*}
By combining the bound on $\alpha$ with the bound on $f_{\alpha}$, we obtain that
\begin{align*}
\max_k||\alpha^k_*|| \leq \max_k\frac{||f_{\alpha^k_*}||_{\mu}}{\sqrt{\omega}} \leq \frac{\Vmax}{\sqrt{\omega}}
\end{align*}

\textit{Transfer error.} Since $\tQ^k$ is the truncation of a function $f_{\halpha^k}=\hQ^k$ belonging to $\F$, the transfer error is
\begin{align*}
\max_k ||(\T_1-\bT_\lambda)\tQ^k||_{\mu} = \sup_{\alpha} \norm{(\T_1-\bT_\lambda)T(f_{\alpha})}_{\mu}.
\end{align*}

Finally, the statement of the theorem follows by taking a union bound over $K$ iterations.

\end{proof}


\section{Analysis of BAT}\label{app.proof.adapt}


\subsection{Proof of Theorem~\ref{thm:bat.rand.iter}}

\begin{lemma}\label{l:bat.rand.lambda}
Let $\{(X_s,A_s,R_s^1,\ldots,R_s^\nTasks)\}_{s=1}^S$ be a training set where $(X_s,A_s)\iid\mu$ and $R_s^m = \reward^m(X_s,A_s)$ and for each state-action pair and for each task $m$, $T$ next states $Y_{s,t}^m \sim \dynamics^m(\cdot | X_s,A_s)$ with $t=1,\ldots,T$ are available. For any fixed bounded function $Q\in\B(\state\times\action; \Vmax)$, the $\hlambda$ returned by minimizing Eq.~\ref{e:est.proportion} is such that 
\begin{align}\label{e:bat.rand.lambda}
\avgTaskDist_{\hlambda}(Q) - \avgTaskDist_{\lambda_*}(Q) \leq 2\Vmax\sqrt{\frac{(M-2) \log 4\nDist/\delta}{\nDist}} + 16\Vmax^2 \frac{\log 4SM / \delta}{T}
\end{align}
with probability $1-\delta$.
\end{lemma}

\begin{proof}\textit{[Lemma~\ref{l:bat.rand.lambda}]}

The sketch of the proof is as follows. For any state-action pair $X_s, A_s$, we define
\begin{align*}
\hTaskDist_\lambda(X_s,A_s) = R_s^1 - \avgRatioC R_s^m + \gamma\frac{1}{T}\sum_{t=1}^T \Big(\max_{a'} \tQ^{k-1}(Y_{s,t}^1,a')-\avgRatioC\max_{a'} \tQ^{k-1}(Y_{s,t}^m,a')\Big),
\end{align*}
and
\begin{align*}
\taskDist_\lambda(X_s,A_s) = (\T_1\tQ^{k-1})(X_s,A_s) - \avgRatioC (\T^m\tQ^{k-1})(X_s,A_s).
\end{align*}
As a result, $\avgTaskDist_\lambda = \mathbb E_\mu \big[\taskDist_\lambda(x,a)^2\big]$ and $\hTaskDist_\lambda = \avgDist\hTaskDist_\lambda(X_s,A_s)^2$. By Pollard's inequality on the $(M\text{-}2)$-dimensional simplex $\Lambda$, we have for any $\lambda\in\Lambda$
\begin{align}\label{e:lemma1.step1}
| E_\mu \big[\taskDist_\lambda(x,a)^2\big] - \avgDist\taskDist_\lambda(X_s,A_s)^2 | \leq \Vmax\sqrt{\frac{(M-2) \log \nDist/\delta'}{\nDist}}
\end{align}
with probability $1-\delta'$. Using Chernoff-Hoeffding inequality we now bound the distance between the true Bellman operators in $\taskDist_\lambda(X_s,A_s)$ and their estimates in $\hTaskDist_\lambda(X_s,A_s)$. By triangle inequality and the previous definitions, we obtain the following series of inequlities
\begin{align}\label{e:lemma1.step2}
| \avgDist&\taskDist_\lambda(X_s,A_s)^2 - \avgDist\hTaskDist_\lambda(X_s,A_s)^2 | \leq | \avgDist(\taskDist_\lambda(X_s,A_s) - \hTaskDist_\lambda(X_s,A_s))^2 | \nonumber\\
&\leq \max_{s} \Big( \taskDist_\lambda(X_s,A_s)- \avgDist\hTaskDist_\lambda(X_s,A_s)\Big)^2 \nonumber\\
&\leq 2\max_s \max_m \Big((\T^m\tQ^{k-1})(X_s,A_x) - R_s^m - \gamma \frac{1}{T} \sum_{t=1}^T \max_{a'} Q(Y_{s,t}^m,a') \Big)^2\nonumber\\
&\leq 2\Bigg( 2\Vmax \sqrt{\frac{\log SM / \delta'}{T}} \Bigg)^2
\end{align}
By using Eqs \ref{e:lemma1.step1} and \ref{e:lemma1.step2}, we have for any $\lambda\in\Lambda$
\begin{align*}
|\avgTaskDist_\lambda - \hTaskDist_\lambda| \leq \Vmax\sqrt{\frac{(M-2) \log \nDist/\delta'}{\nDist}} + 8\Vmax^2 \frac{\log SM / \delta'}{T},
\end{align*}
with probability $1-2\delta'$. Finally, we can prove the following sequence of inequalities
\begin{align*}
\avgTaskDist_{\hlambda} - \avgTaskDist_{\lambda_*} &= \avgTaskDist_{\hlambda} - \hTaskDist_{\hlambda} + \hTaskDist_{\hlambda} - \hTaskDist_{\lambda_*} + \hTaskDist_{\lambda_*} - \avgTaskDist_{\lambda_*} \\
&\leq 2 \sup_{\lambda\in\Lambda} |\avgTaskDist_\lambda - \hTaskDist_\lambda| \leq 2\Vmax\sqrt{\frac{(M-2) \log \nDist/\delta'}{\nDist}} + 16\Vmax^2 \frac{\log SM / \delta'}{T},
\end{align*}
with probability $1-4\delta'$. By setting $\delta = 4\delta'$ the statement follows.
\end{proof}


\section{Additional Experimental Analysis}

In this section, we provide additional experimental results related to the BTT algorithm.

\subsection{Analysis of parameters $\beta$}

\begin{figure}[t]
\begin{center}
\includegraphics[width=0.495\textwidth]{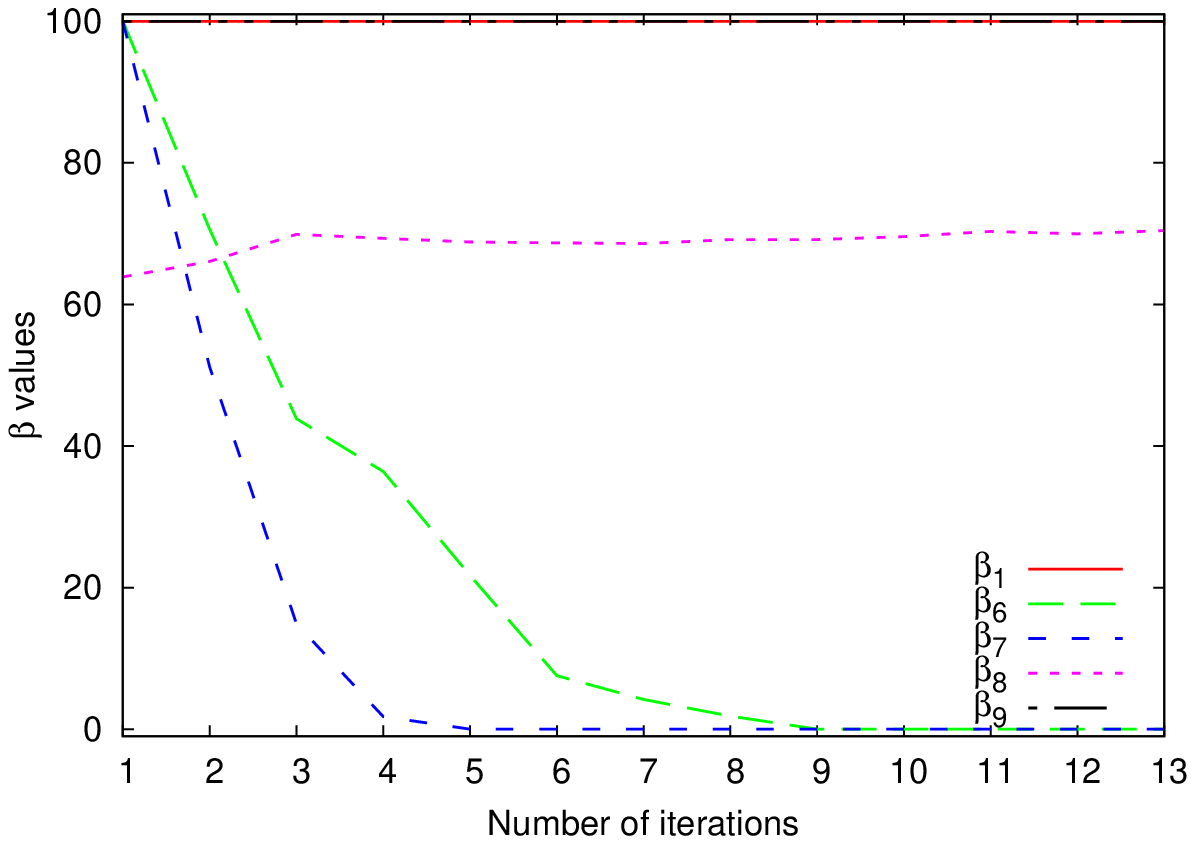}
\includegraphics[width=0.495\textwidth]{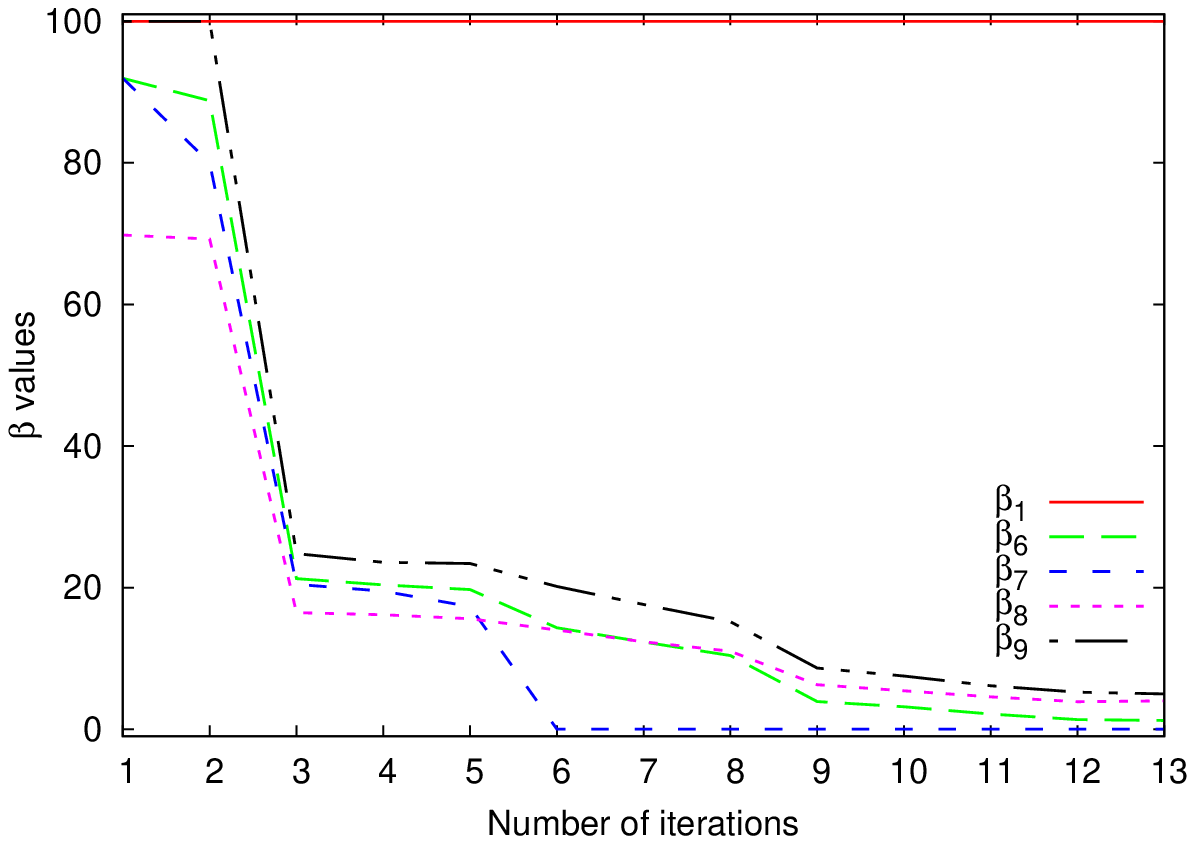}
\end{center}
\caption{Percentage of samples for each task as selected by BTT as a function of FQI iterations when $5000$ samples are available for each source task. \emph{Left:} $100$ target samples available. \emph{Right:} $10000$ target samples available.}\label{F:beta}
\end{figure}

In order to have a better understanding on how BTT trades off between the need for samples and the risk of introducing a large transfer error, in Figure~\ref{F:beta} we show the values of parameters $\beta$ (which represent the percentage of samples transferred from each task) as optimized by the BTT algorithm at each FQI iteration. The tasks considered are the target task $\MDP_1$ and the source tasks $\MDP_6,\MDP_7,\MDP_8,\MDP_9$, each one with $5000$ samples available. Figure~\ref{F:beta} compares the values of $\beta$ in two scenarios: when the available target samples are $100$ (left pane) and $10000$ (right pane). Obviously, BTT always exploits all the target samples ($\beta_1 = 1$). When few target samples are available, BTT transfers high percentages of samples from the source tasks. In particular, it transfers all the samples available from task $\MDP_9$ in each iteration, and also the percentage of samples taken from task $\MDP_8$ is almost constant (about $0.7$). The percentage of samples transferred from tasks $\MDP_6$ and $\MDP_7$ starts from $100\%$ and decreases (with different rates) through iterations reaching zero after iteration 10. This behavior can be explained by the attempt to include as many samples as possible at the earlier iterations when it is still possible to find combinations of sources with a small transfer error. As the iterations continue, no suitable combination of sources is possible and the algorithm is forced to reduce the number of samples from the more different source tasks. On the other hand, when the number of target samples is large enough, we notice that the percentage of samples transferred from all the source tasks drop down after the first FQI iterations. In fact, in this case, BTT exploits a lot of source samples to produce a more accurate approximations only when a very small error is introduced. On the other hand, as the iterations progress, the samples from the source tasks (even when optimally combined) provide a poor approximation of the Q-functions and, as a result, BTT, given the large number of target samples ($10000$), prefers to reduce the number of samples transferred from the source tasks.

\begin{figure}[t]
\begin{center}
\includegraphics[width=0.495\textwidth]{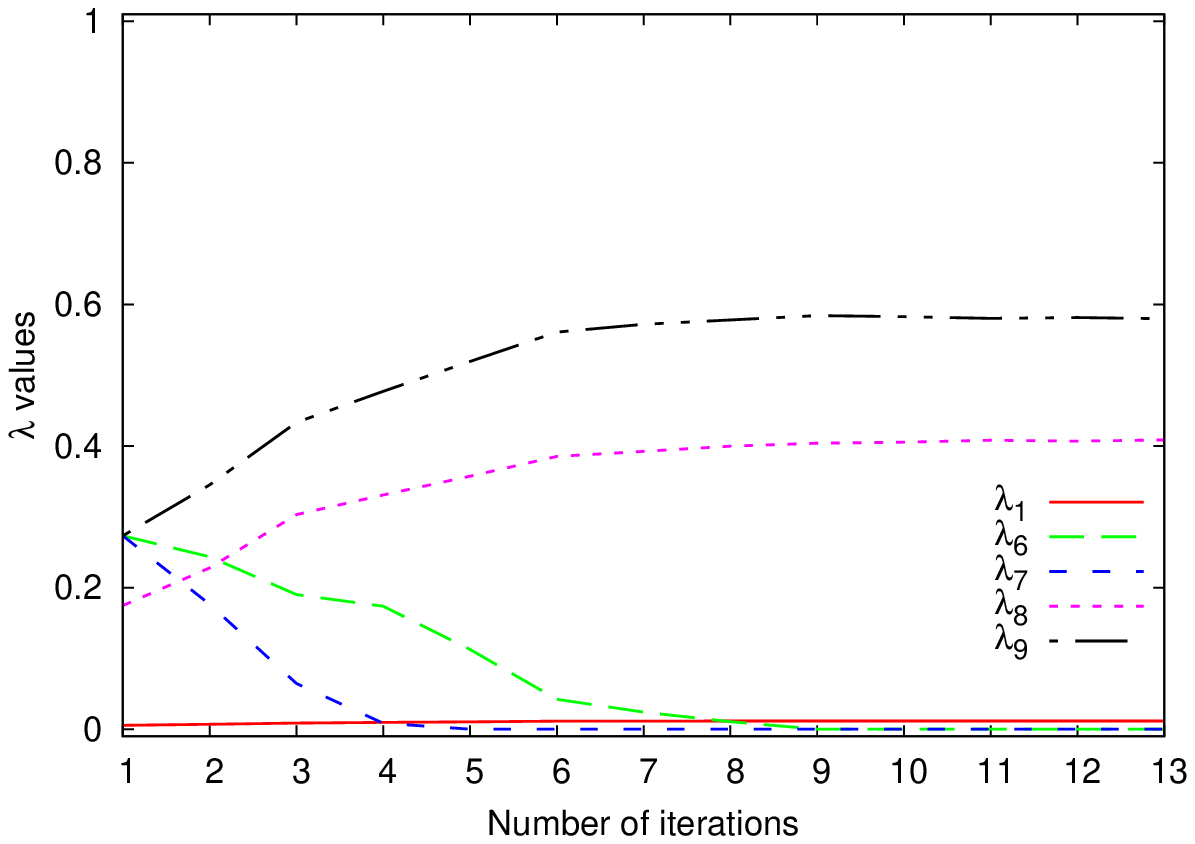}
\includegraphics[width=0.495\textwidth]{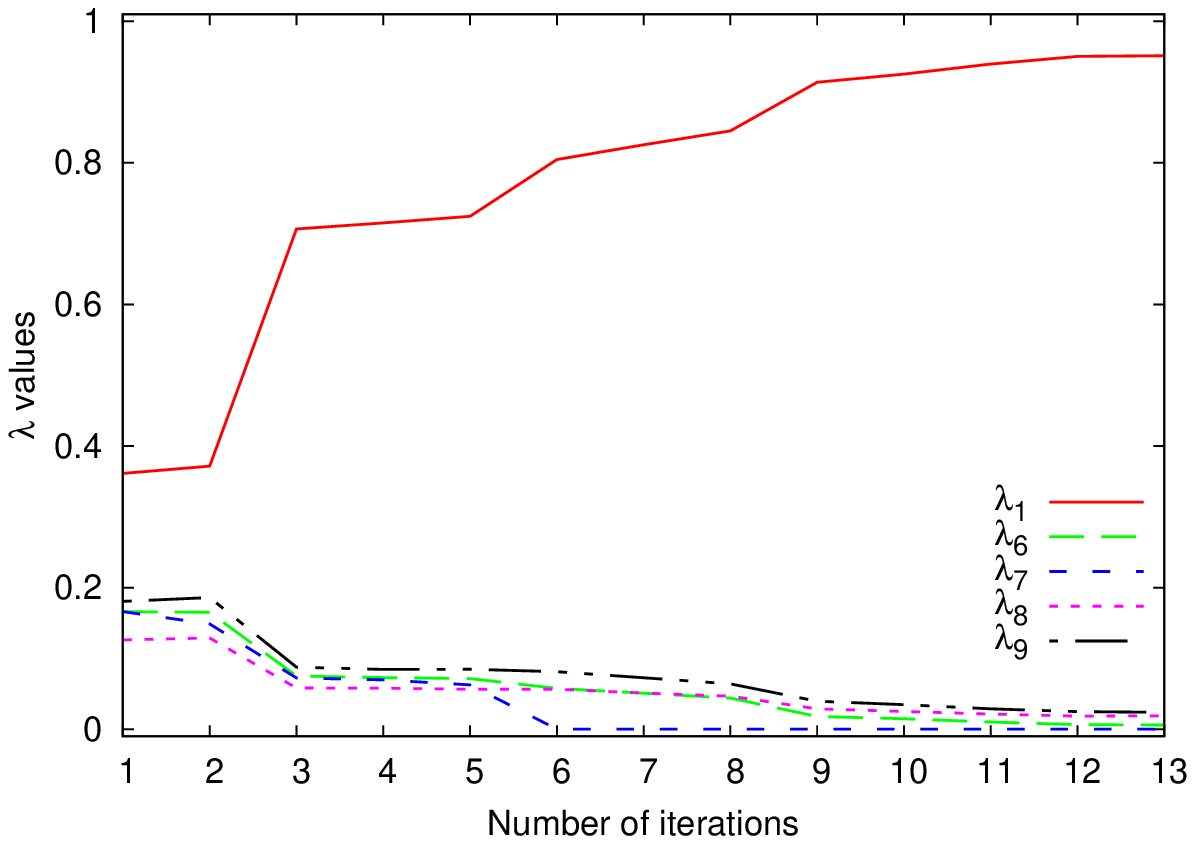}
\end{center}
\caption{Proportions in the combination of task samples induced by BTT $\beta$ parameters as a function of FQI iterations. \emph{Left:} $100$ target samples available. \emph{Right:} $10000$ target samples available.}\label{F:lambda}
\end{figure}

In Figure~\ref{F:lambda} we show the proportions $\lambda$ induced by the weights $\beta$ computed by BTT. When only $100$ target samples are available, BTT tries to compensate the lack of target samples by transferring a large amount of samples from a suitable combination of source tasks, while, when many target samples are available, it considers source samples only when they can guarantee a good approximation of the target Q-functions, otherwise the proportions are changed in favor of the target samples.

Finally, in Figure~\ref{F:samples}, we consider the total number of samples used to train FQI at each iteration under the two scenarios. As expected, at the first iterations, due to the similarity between source tasks and target task, the number of samples provided to FQI by BTT is very large and then it decreases through iterations. It is interesting to notice that the total number of samples selected in the two scenarios are quite similar (in particular starting from the third iteration), which is an effect of the tradeoff realized by the BTT algorithm.

\begin{figure}[t]
\begin{center}
\includegraphics[width=0.495\textwidth]{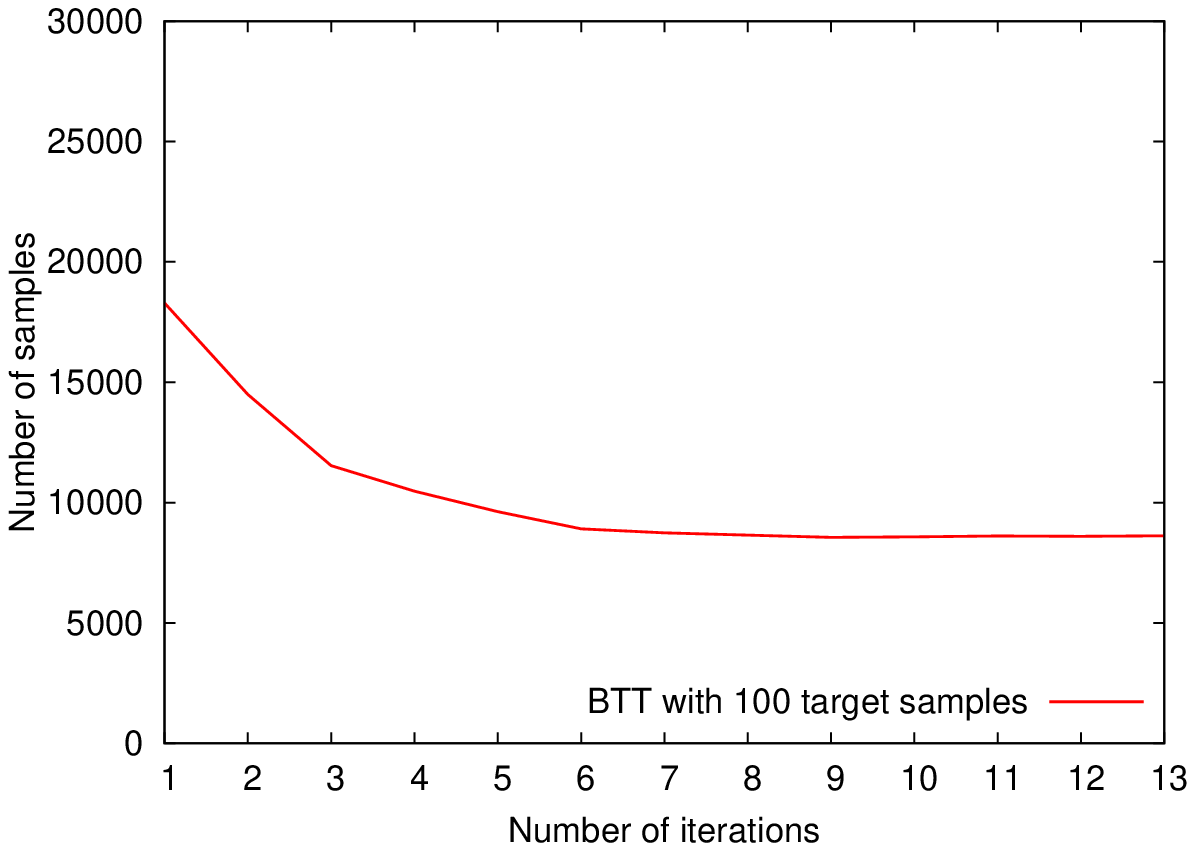}
\includegraphics[width=0.495\textwidth]{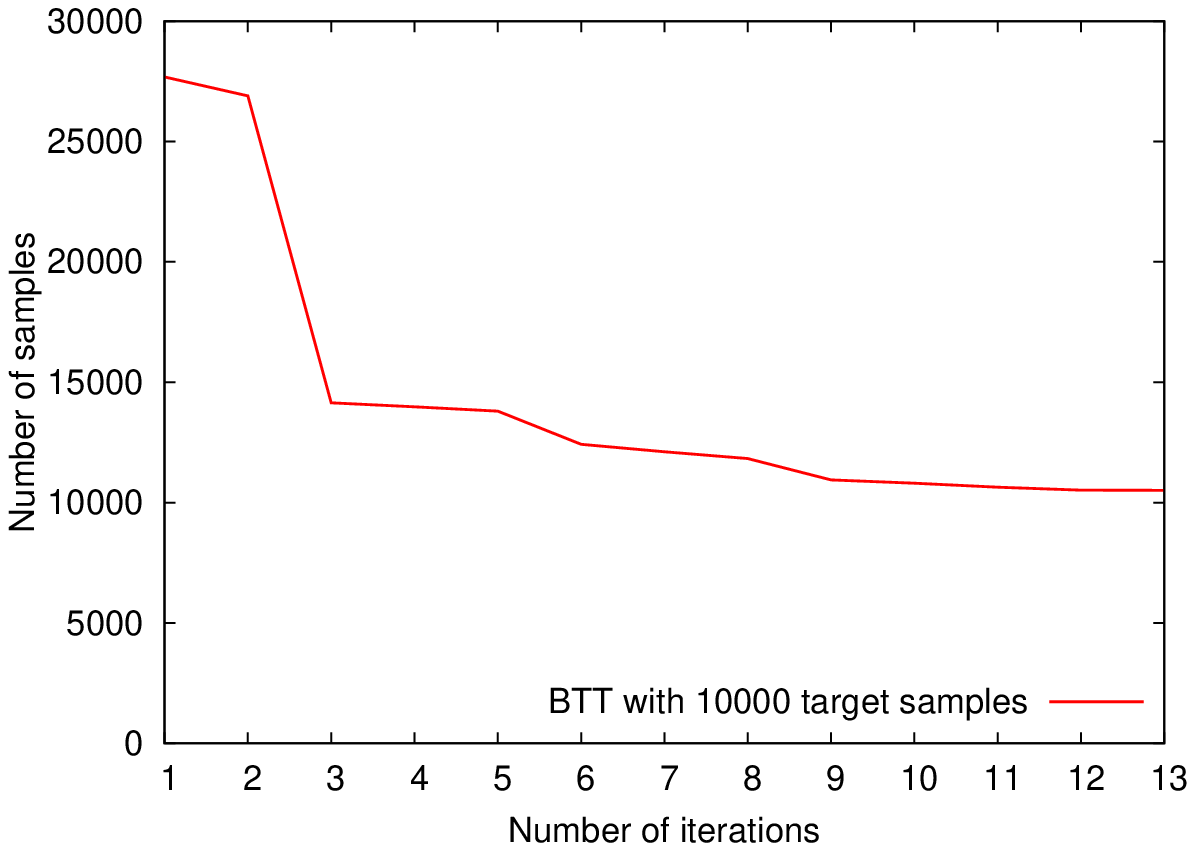}
\end{center}
\caption{Number of samples actually used at each iteration (after the transfer) by FQI. \emph{Left:} $100$ target samples available. \emph{Right:} $10000$ target samples available.}\label{F:samples}
\end{figure}

\subsection{Analysis of parameter $\tau$}

The tradeoff realized by the BTT algorithm is tuned by the parameter $\tau$ multiplying the estimation error. In Figure~\ref{F:tau} we analyze the effect of $\tau$ on the learning performances. Different values of the tradeoff parameter have been tried ($\tau = 0.25, 0.50, 0.75, 1.0$) when both $5000$ samples (left pane) and $10000$ samples (right pane) are available for each source task. As we can notice, BTT is quite robust w.r.t. the choice of the tradeoff parameter. The main differences appear when a small number of target samples is available. In this case, low values of $\tau$ make BTT more concerned about the transfer error and, as a result, it tends to avoid transferring source samples, even if target samples are not enough. On the other hand, with high values of $\tau$, BTT is pushed to use more source samples, and this may negatively affect the performance when several target tasks are available and no combination of source tasks provides a good target approximation.

\begin{figure}[t]
\begin{center}
\includegraphics[width=0.495\textwidth]{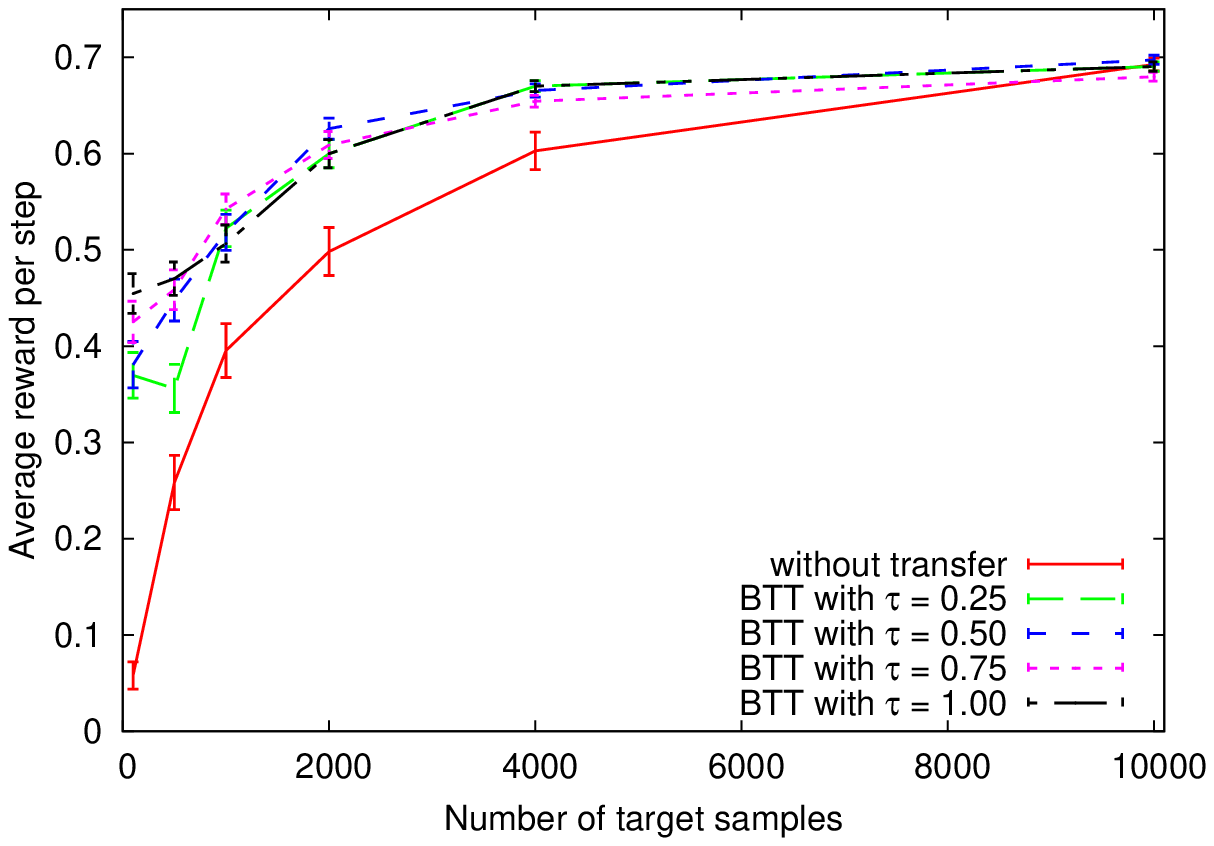}
\includegraphics[width=0.495\textwidth]{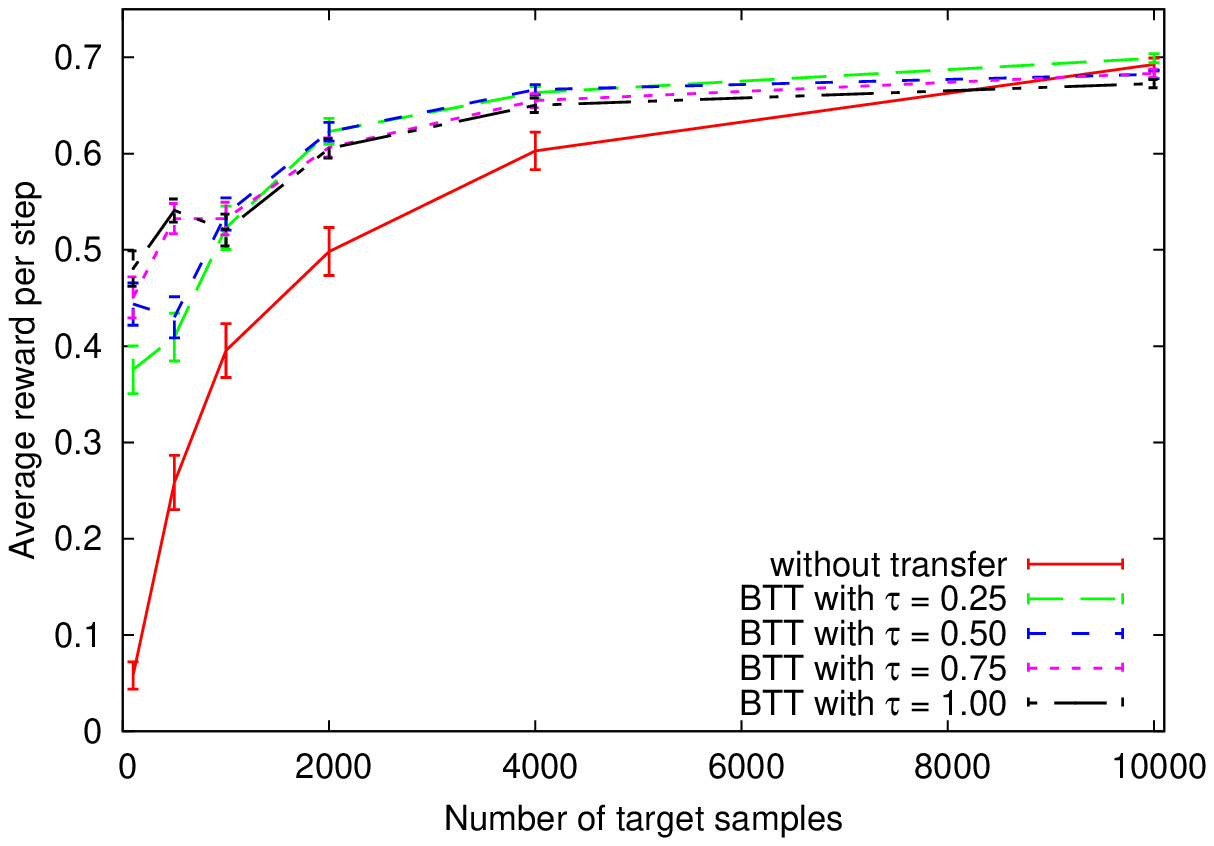}
\end{center}
\caption{Comparison of the performance of FQI using BTT algorithm with different values of the tradeoff parameter $\tau$. \emph{Left:} $5000$ samples available for each source. \emph{Right:} $10000$ samples available for each source.}\label{F:tau}
\end{figure}

\end{document}